%% file: neurips_2026.tex
\documentclass{article}

\PassOptionsToPackage{numbers, compress}{natbib} 
\usepackage[preprint]{neurips_2026}
\usepackage{wrapfig}

\usepackage[utf8]{inputenc} 
\usepackage[T1]{fontenc}    
\usepackage{hyperref}       
\usepackage{url}            
\usepackage{booktabs}       
\usepackage{amsfonts}       
\usepackage{nicefrac}       
\usepackage{microtype}      
\usepackage{xcolor}         
\usepackage{algorithm}
\usepackage{algpseudocode}

\usepackage{bm}
\usepackage{CJKutf8}
\usepackage{amsmath}
\usepackage{amssymb}
\usepackage{mathtools}
\usepackage{amsthm}
\usepackage{adjustbox}
\usepackage{enumitem} 

\usepackage[table]{xcolor}

\usepackage{tabularx}
\usepackage{tcolorbox}
\tcbuselibrary{breakable,listingsutf8,skins}
\usepackage{listings}
\newtcblisting{promptbox}[1]{
  enhanced,
  breakable,
  colback=white,
  colframe=black,
  boxrule=0.8pt,
  title=#1,
  listing only,
  left=2mm,right=2mm,top=1mm,bottom=1mm,
  listing options={
    basicstyle=\ttfamily\small,
    columns=fullflexible,
    keepspaces=true,
    showstringspaces=false,
    breaklines=true,
    breakatwhitespace=false
  }
}

\usepackage{graphicx}

\usepackage{makecell}
\usepackage{subcaption}

\usepackage{caption}

\title{TrustRoboReward: Preference-Ordered Isotonic Score Editing for Multi-Paradigm Robot Reward Models}

\author{%
  \begin{minipage}{0.95\textwidth}
    \centering
    {\large\bfseries
      Yidong Wang\textsuperscript{1,*},
      Yan Zhan\textsuperscript{1,*},
      Ziteng Feng\textsuperscript{3},
      Zhenyu Cui\textsuperscript{4},
      Ziyi Zhou\textsuperscript{5},
      Renzhao Liang\textsuperscript{6},
      Jiaxuan Zhu\textsuperscript{4},
      Zilei Yang\textsuperscript{6},
      Yiran Zhao\textsuperscript{7},
      Zhongkuan Mao\textsuperscript{8},
      Bo Jia\textsuperscript{9},
      Hanchu Ni\textsuperscript{1},
      Chenggang Xie\textsuperscript{6},
      Biao Liu\textsuperscript{4},
      Yi Zhang\textsuperscript{2,\(\ddagger\)},
      Yong Dai\textsuperscript{2,\(\ddagger\)},
      Xiaozhu Ju\textsuperscript{2,\(\dagger\)},
      Wei Ye\textsuperscript{1,\(\dagger\)},
      Shikun Zhang\textsuperscript{1,\(\dagger\)},\\[0.45em]
    }
    {\normalsize\bfseries
      \textsuperscript{1}Peking University
      \qquad
      \textsuperscript{2}Beijing Innovation Center of Humanoid Robotics
      \qquad
      \textsuperscript{3}University of Science and Technology of China
      \qquad
      \textsuperscript{4}Southeast University
      \qquad
      \textsuperscript{5}Southern University of Science and Technology
      \qquad
      \textsuperscript{6}Beijing University of Aeronautics and Astronautics
      \qquad
      \textsuperscript{7}Beijing Language and Culture University
      \qquad
      \textsuperscript{8}Sichuan University
      \qquad
      \textsuperscript{9}Beijing University of Posts and Telecommunications}\\[0.25em]
    {\normalsize\normalfont
      \textsuperscript{*}Equal contribution
      \qquad
      \textsuperscript{\(\dagger\)}Corresponding authors
      \qquad
      \textsuperscript{\(\ddagger\)}Project leaders}
  \end{minipage}%
}

\begin{document}

\maketitle

\begin{abstract}
Reward models are a core bottleneck for reinforcement learning in embodied AI. Long-horizon robotic manipulation requires scalable vision-based feedback, which cannot be fully satisfied by manually crafted reward functions or task-specific human annotations. Existing open-source VLM-based reward judges like RoboReward only adopt a simple 1--5 trajectory progress scoring scheme, lacking pairwise preferences essential for modern RLHF, DPO and Bradley-Terry frameworks, while also failing to optimize agents' video scene understanding directly. Simply augmenting RoboReward with pairwise comparison and video-QA supervision causes inherent inconsistency between pairwise preferences and pointwise scores, introducing training noise and hurting downstream performance---an issue that aggregation methods such as TrustJudge cannot resolve. To address this, we propose TrustRoboReward\footnote{Our codebase and experimental results are available at: \url{https://github.com/TrustRoboreward/TrustRoboReward}.}, a multi-paradigm reward modeling framework equipped with the Preference-Ordered Isotonic Score Editing (POISE) module. We construct a unified four-paradigm dataset containing trajectory progress scoring (Score-A), video-QA answer quality scoring (Score-B), and their pairwise preference counterparts (Pair-A, Pair-B). We find pairwise labels align better with human judgment than pointwise scores, inspiring us to calibrate pointwise scores to avoid score-pair reversals against pairwise preferences. The POISE module rectifies pointwise scores and eliminates cross-paradigm reversal conflicts unresolved by TrustJudge. Theoretically, POISE provably reduces training-corpus score-pair reversal conflicts from 20.15\% to 0\%, whereas TrustJudge still retains 20.46\% reversal conflicts on the same training corpus. Evaluated on our multi-paradigm benchmark, the Qwen3-VL-4B model trained with POISE achieves an overall reward score of 77.96\%, nearly matching GPT-5-mini (78.09\%, gap 0.13\%) and outperforming the strongest RoboReward-4B baseline by 10.13\%. It also lifts test-time score-pair consistency to 71.90\%, exceeding both RoboReward-4B (57.26\%) and GPT-5-mini (68.09\%). Further integrating TrustJudge aggregation during inference boosts our model's overall score to 78.57\%, surpassing the proprietary GPT-5-mini teacher model.

\end{abstract}

\section{Introduction}
\label{sec:Introduction}
Despite advances in reinforcement learning (RL) for robot policy training~\citep{ma2024eureka,luo2024serl,liu2023visual,guo2025improving,kimrobot,ye2025reinforcement,ghasemipourself}, its large-scale deployment is still limited by the lack of scalable and generalizable reward models. Such models often rely on labor-intensive manual labeling or fragile handcrafted functions~\citep{ma2024eureka,christiano2017deep,ouyang2022instructgpt,lee2026roboreward}. Since robots operate in unstructured and dynamic environments, this problem is more challenging than in the mathematical/software domains that have automated validation mechanisms~\citep{shao2024deepseekmath,guo2025deepseek,guanrstar,pan2025training,jimenez2023swe,yang2024swe,wang2024openhands}. Vision-language models (VLMs)~\citep{liu2023visual,achiam2023gpt,chen2024internvl, bai2025qwen3} offer a solution, with the RoboReward~\citep{lee2026roboreward} framework serving as a typical example. This framework uses VLMs to assign task-progress rewards without per-task reward engineering, and through a consistent 1–5 scale discrete scoring, it trains Qwen~\citep{bai2025qwen3} VLM reward models that outperform larger closed-source VLMs.

However, RoboReward is limited along two key dimensions that we
address in this paper. \textbf{(1) Narrow single-paradigm supervision.}
RoboReward conducts training and evaluation via per-trajectory scoring on a 1-to-5 discrete rating scale. This single-formulation choice leaves two independent capability gaps in the reward signal. \textbf{Pointwise vs.\ pairwise.} RoboReward's outputs only take the form of pointwise scalar scores, while pairwise preference formats are widely adopted in modern reinforcement learning pipelines, including DPO~\citep{rafailov2023dpo}, RLHF~\citep{ouyang2022instructgpt} reward modeling, Bradley–Terry~\citep{bradley1952rank} and Elo ranking, as well as RLAIF-style~\citep{lee2023rlaif} preference optimization. Pairwise comparisons better conform to human annotation practices. In addition to public benchmarks like AlpacaEval~\citep{li2023alpacaeval} and Chatbot Arena~\citep{chiang2024chatbot}, our internal results presented in Figure~\ref{fig:intro-poise-calibration} also confirm the superior alignment of pairwise evaluation with human annotations. A pointwise-only reward model must manufacture pairwise preferences by
post-hoc score processing. However, as shown in Figure~\ref{fig:intro-poise-calibration}, real pairwise preferences conducted by humans/reward models often conflict with scalar score rankings. \textbf{Behavioral vs.\ grounded understanding.} Integer task-progress rates merely measure the robot’s execution performance, yet never directly verify whether the VLM comprehends the scene on which it reasons. Progress-only task completion scoring is inherently vulnerable to reward hacking. This metric only quantifies the robot’s final task execution outcome, with no mechanism to validate whether the underlying vision-language model has true grounded understanding of the target scene. This critical blind spot enables models to achieve high scores via spurious correlations instead of semantically correct task reasoning, as shown in Appendix \ref{fig:case-study-frames}, a hidden failure mode completely undetectable in outcome-only evaluation. \textbf{(2) Uncontrolled Cross-Paradigm Training Noise from Supervision Inconsistency.}
A straightforward fix for the first limitation is to add pairwise comparison and grounded-QA supervision to existing completion-focused score supervision for joint training. Pointwise scoring labels and pairwise ranking labels, even when deployed exclusively to evaluate task completion, exhibit the fundamental inconsistency revealed in TrustJudge\citep{wang2026trustjudge}, and as a result, regularly produce conflicting judgments on the same sample: a sample earning a high completion score in pointwise assessment may be marked as the losing candidate in direct pairwise comparison. Critically, TrustJudge mitigates cross-paradigm inconsistency via probabilistic aggregation to recalibrate raw scoring outputs. This approach can only alleviate inherent cross-paradigm inconsistency rather than eradicating it entirely. If left unaddressed in the training corpus, these conflicting signals generate opposing training signals during training, thus preventing the reward model from converging to a stable, self-consistent judgment function.

\begin{figure}[H]
    \centering
    \includegraphics[width=\linewidth]{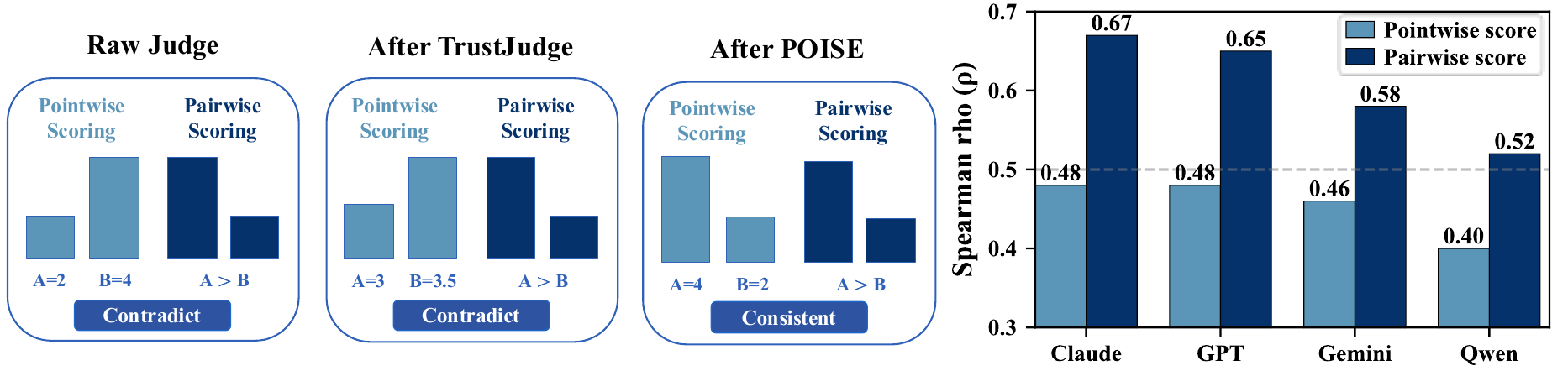}
    \caption{Calibration of score labels using POISE with pair preference signals corrects inconsistencies. Spearman's $\rho > 0.5$ indicates strong positive correlation between LLM and human expert rankings. Pairwise scores align better with human preferences than pointwise scores. Gemini, GPT, Claude, and Qwen denote Gemini 3.1 Pro, GPT-5-mini, Claude Opus 4.6, and Qwen3-VL-32B respectively.}
    \label{fig:intro-poise-calibration}
\end{figure}

These limitations are closely coupled and cannot be treated separately. Deriving pairwise preferences from score-only reward outputs naturally gives rise to cross-paradigm contradictions, and introducing pairwise supervision further brings cross-paradigm noise that must be regulated to enable robust multi-paradigm reward modeling. To address both limitations, we propose \textbf{TrustRoboReward}, a multi-paradigm reward modeling framework consisting of two complementary components. \textbf{Multi-paradigm supervision} (§\ref{sec:method-mpd}) queries a teacher to construct four complementary supervision paradigms---trajectory progress scoring (Score-A), video-QA answer quality scoring (Score-B), and their pairwise counterparts (Pair-A and Pair-B)---on a shared set of robot video samples, simultaneously closing the pointwise-versus-pairwise gap and the behavioral-versus-grounded gap. \textbf{POISE} (Preference-Ordered Isotonic Score Editing; §\ref{sec:method-poise}) then eliminates the cross-paradigm training noise that multi-paradigm supervision inevitably introduces: it treats pairwise labels as a target partial order and projects raw pointwise scores onto the monotone cone of that order via the Pool-Adjacent-Violators Algorithm, provably reducing training-corpus score-pair reversal conflicts to exactly $0\%$, which is a training-time guarantee that TrustJudge's probabilistic aggregation cannot provide.

We evaluate TrustRoboReward on a comprehensive multi-paradigm benchmark spanning all four supervision paradigms. With a Qwen3-VL-4B backbone, our model achieves an overall reward score of $77.96\%$, nearly matching the proprietary GPT-5-mini\citep{singh2025openai} teacher ($78.09\%$, a gap of only $0.13\%$) while outperforming the strongest open-source RoboReward-4B baseline by $10.13\%$. On test-time score-pair consistency, our model reaches $71.90\%$, exceeding both RoboReward-4B ($57.26\%$) and GPT-5-mini ($68.09\%$). Combining our model with TrustJudge at inference time further lifts the overall score to $78.57\%$, surpassing the proprietary teacher. We additionally validate TrustRoboReward as a downstream reward signal for embodied policy optimization on PAIBench-G\citep{zhou2025pai}, where it consistently outperforms RoboReward in guiding robot manipulation.

In conclusion, we contribute: \textbf{(i)} a four-paradigm supervision corpus (Score-A/B, Pair-A/B) that simultaneously closes the pointwise-versus-pairwise and behavioral-versus-grounded capability gaps; \textbf{(ii)} POISE, a linear-time isotonic denoising algorithm that provably reduces training-corpus score-pair reversal conflicts to exactly $0\%$ and monotonically improves score quality (Theorem~\ref{thm:poise-quality}); \textbf{(iii)} an open-weight Qwen3-VL-4B reward model that nearly matches GPT-5-mini (gap $0.13\%$) and surpasses it with TrustJudge aggregation.


\section{Related Work}
\label{sec:related_work}

\textbf{VLMs as reward judges.}
VLMs such as LLaVA~\citep{liu2023visual, lillava} and
Qwen3-VL~\citep{bai2025qwen3} have enabled the VLM-as-a-Judge
paradigm~\citep{zheng2023judging,lee2024prometheus,chen2024mllm,xiong2025llava}, in which a strong
generalist model scores or ranks candidate outputs in lieu of human
annotators. A growing body of work casts such judges as
\emph{generative} reward models that emit chain-of-thought
rationales together with scalar
verdicts~\citep{mahan2024genrm,cao2024compassjudger}, and as
zero-shot visual reward models for downstream
RL~\citep{rocamonde2024vlmrm, wang2024rl, baumli2023vision} or video
generation~\citep{xu2026visionreward, liuimproving}.
Dedicated reward-model benchmarks such as
RewardBench~\citep{lambert2024rewardbench, yasunaga2025multimodal,li2025vl} systematically evaluate
such judges and reveal persistent reliability gaps.
In robotics specifically, RoboReward~\citep{lee2026roboreward}
distills a proprietary teacher into Qwen-based judges emitting a
$1$-to-$5$ trajectory-progress score. We follow this distillation
recipe but extend supervision beyond single-paradigm pointwise
scoring to pairwise preferences and grounded video-QA, and
explicitly resolve the cross-paradigm inconsistency that naive
multi-paradigm distillation introduces.

\textbf{Preference learning and score--comparison consistency.}
Modern alignment pipelines consume \emph{pairwise} preferences:
RLHF~\citep{ouyang2022instructgpt} and DPO~\citep{rafailov2023dpo, meng2024simpo}
train reward models under Bradley--Terry~\citep{bradley1952rank}
assumptions, and pairwise judgments align better with humans than
Likert scoring~\citep{zheng2023judging,chen2024mllm, liualigning}---a trend
Figure~\ref{fig:intro-poise-calibration} also confirms for embodied
video. The recent shift toward AI-generated preference labels
(RLAIF)~\citep{lee2023rlaif, bai2022constitutional} and self-improving
judges~\citep{mahan2024genrm, wu2025meta} further amplifies the value of
\emph{consistent} preference signals at scale. When the same judge
produces both pointwise and pairwise labels, however, the two
channels frequently disagree:
TrustJudge~\citep{wang2026trustjudge} quantifies this
score--comparison inconsistency and smooths it at \emph{inference}
via probabilistic aggregation. We address the same diagnosis from
the opposite end, treating pairwise labels as a partial-order
constraint on the \emph{training} corpus and projecting pointwise
scores onto its monotone cone via the Pool-Adjacent-Violators
algorithm for isotonic
regression~\citep{ayer1955empirical}. This training-time
correction is provably consistent ($\eta_{\mathrm{cross}}{=}0$)
and complements TrustJudge at inference (§\ref{tab:ablation:tj}).

\section{Method}
\label{sec:method}

\newcommand{\ncross}{\eta_{\mathrm{cross}}}


We address the two limitations raised in
Section~\ref{sec:Introduction} with two complementary components. To close the narrow single-paradigm gap (limitation~1),
\emph{multi-paradigm supervision} (§\ref{sec:method-mpd})
queries a teacher to construct supervision under four paradigms—two pointwise paradigms, namely trajectory progress scoring (Score-A) and video-QA answer quality scoring (Score-B), and two pairwise paradigms, namely their pairwise preference forms for dual-video preference (Pair-A) and dual-answer preference (Pair-B)—on a shared set of RoboReward video samples. To eliminate the cross-paradigm training
noise this supervision inevitably introduces (limitation~2),
POISE (\emph{Preference-Ordered Isotonic Score Editing};
§\ref{sec:method-poise}) reads the pairwise labels as a target partial
order, projects the raw pointwise scores onto the monotone cone
of that order by isotonic correction with the Pool-Adjacent-%
Violators Algorithm (PAVA)~\citep{ayer1955empirical}, and feeds the resulting self-%
consistent corpus into multi-paradigm training.

\subsection{Multi-paradigm Supervision}
\label{sec:method-mpd}

To overcome RoboReward's narrow single-paradigm supervision, we
distill four complementary reward paradigms from the teacher
query onto a shared item pool, together spanning both capability
axes identified in Section~\ref{sec:Introduction}. Score-A asks the
teacher to rate, on a $1$-to-$5$ scale, a single trajectory's
task-progress under a given prompt, matching RoboReward's
original scoring protocol. Score-B rates video-QA candidate answers on a consistent scale, prompting VLMs to recognize objects, actions and underlying states rather than relying on surface-level cues. Pair-A compares two trajectories on the same task and
emits a label in $\{\mathrm{win},\mathrm{lose},\mathrm{tie}\}$,
providing the pairwise format that DPO and RLHF reward modeling
consume natively. Pair-B compares two candidate answers for the video-QA instance with a shared label vocabulary.

More crucially, the four paradigms share the same set of
RoboReward video samples, so their labels can be compared
directly on the same samples. We organize samples into groups
by task context: each group consists of candidate trajectories
sharing a common task instruction (Score-A / Pair-A) or
candidate answers sharing a common video (Score-B / Pair-B).
All supervision is defined within a group and never 
across groups, so every pairwise sample aligns with the score
labels of its two endpoints. And these two endpoints have shared-grouping property that
makes §\ref{sec:method-poise}'s cross-paradigm consistency
constraint well-defined.

All four paradigms use a uniform Point-then-Explain output
format (score or comparison token first, then rationale) and are
consumed in a unified two-stage training pipeline (SFT\citep{weifinetuned} then RL)
under matched training configurations, so label-variant
comparisons (raw vs corrected) isolate the label effect rather
than differences in training recipe. The full prompt specification is provided in Appendix~\ref{app:prompts}, including the shared system prompt in Appendix~\ref{app:prompt_system}, the Score-A template in Appendix~\ref{app:prompt_scorea}, the Score-B template in Appendix~\ref{app:prompt_scoreb}, the Pair-A template in Appendix~\ref{app:prompt_paira}, and the Pair-B template in Appendix~\ref{app:prompt_pairb}.

Multi-paradigm supervision therefore resolves limitation~1 of
Section~\ref{sec:Introduction}: supervision under four
complementary paradigms on the same underlying video samples
simultaneously closes the pointwise-versus-pairwise and
behavioral-versus-grounded capability gaps that RoboReward lefts open.

\subsection{POISE: Preference-Ordered Isotonic Score Editing}
\label{sec:method-poise}

The pairwise annotations and pointwise scores are often inconsistent on the same items. To address this mismatch, we propose POISE (Preference-Ordered Isotonic Score Editing) to recalibrate pointwise scores toward weak-order consistency, i.e., no pointwise score ranking contradicts the pairwise preference direction. We formalize the cross-paradigm noise, show why no label-local denoiser can reduce it, derive the partial-order isotonic correction, and give pseudocode with a worked example.

\textbf{Cross-paradigm noise rate.} We define the cross-paradigm noise rate $\ncross$ on a single group as the fraction
of jointly-labeled pairs whose score-implied ordering disagrees
with the pairwise label. Fix a group with $m$ items, score labels
$s_i \in \{1,\dots,5\}$, and pairwise labels $v_{ij} \in
\{\mathrm{win},\mathrm{lose},\mathrm{tie}\}$ (item $i$ wins,
loses, or ties with $j$); the same problem applies once per axis
(Score-A$+$Pair-A and Score-B$+$Pair-B). Formally, $\ncross \;=\;
\Pr_{(i,j)}\!\bigl[(v_{ij} {=} \mathrm{win} \wedge s_i {<} s_j)\,\vee\,
(v_{ij} {=} \mathrm{lose} \wedge s_i {>} s_j)\bigr]$.

Because $\ncross$ couples $\{s_i\}$ and $\{v_{ij}\}$, label-local strategies fundamentally fail to alleviate this inconsistency. This category includes TrustJudge\citep{wang2026trustjudge}, which computes expected scores from probabilistic teacher outputs, and majority voting, which identifies the mode among predictions. Neither approach can align the derived scores with pairwise decisions on the same items.

\begin{algorithm}[h]
\caption[POISE (Preference-Ordered Isotonic Score Editing).]{POISE (Preference-Ordered Isotonic Score Editing).\protect\\\textit{Example ($m{=}4$): raw scores $(s_1, s_2, s_3, s_4) = (2, 1, 5, 3)$; pairwise labels $v_{24} = v_{41} = v_{13} = \mathrm{lose}$ (the remaining labels follow by transitivity), yielding the score chain {\normalfont\large\textcircled{\small 2}} $\prec$ {\normalfont\large\textcircled{\small 4}} $\prec$ {\normalfont\large\textcircled{\small 1}} $\prec$ {\normalfont\large\textcircled{\small 3}}(worst to best).}}
\label{alg:poise}
\begin{algorithmic}[1]
\Require per-group raw score label
$s_1,\dots,s_m \in \{1,\dots,5\}$;
per-group pairwise label
$\{v_{ij}\}_{i,j=1}^{m} \in
\{\mathrm{win},\mathrm{lose},\mathrm{tie}\}$ inducing a valid partial order
\Ensure corrected score label
$\hat{s}_1,\dots,\hat{s}_m \in \{1,\dots,5\}$
($\ncross = 0$ reversal-conflict on the training corpus)
\State $(r_1,\dots,r_m) \gets
\textsc{ReindexByPairwiseOrder}(\{v_{ij}\},\, s_1,\dots,s_m)$
\Comment{$(r_1,\dots,r_m)$: chain order along $v$ from worst to best; $r_i$ is the original score index at chain position $i$}
\Statex \hspace{1.6em}{\footnotesize\color{gray}\textit{// Example: $v$ induces chain order $(r_1,r_2,r_3,r_4) = (2,4,1,3)$, so $(s_{r_1},s_{r_2},s_{r_3},s_{r_4}) = (s_2,s_4,s_1,s_3) = (1,3,2,5)$.}}
\State $(\hat{s}'_1,\dots,\hat{s}'_m) \;\gets\;
\displaystyle\arg\!\min_{s' \in \mathbb{R}^m}\,
\sum_{i=1}^{m}\bigl(s'_i - s_{r_i}\bigr)^{2}
\ \text{s.t.}\ s'_1 \!\le\! s'_2 \!\le\! \dots \!\le\! s'_m$
\Comment{minimize score change subject to chain-monotonicity; $\hat{s}'_i$ may be non-integer. Detailed implementation is given by Eq.~\eqref{eq:pav-block}.}
\Statex \hspace{1.6em}{\footnotesize\color{gray}\textit{// Example: PAV pools chain positions $2,3$ to mean $2.5$; $(\hat{s}'_1,\hat{s}'_2,\hat{s}'_3,\hat{s}'_4) = (1,2.5,2.5,5)$.}}
\Statex \hspace{1.6em}{\footnotesize\color{gray}\textit{// each $\hat{s}'_i$ is the constrained optimum corresponding to the score at chain position $i$, i.e.,
$(\hat{s}'_1,\hat{s}'_2,\hat{s}'_3,\hat{s}'_4)$ corresponds to $(s_2,s_4,s_1,s_3)$}}
\For{$i = 1, \dots, m$}
  \State $\hat{s}_{r_i} \gets \max\!\bigl(1,\,\min(5,\,
  \lfloor \hat{s}'_i + \tfrac{1}{2} \rfloor)\bigr)$
  \Comment{round-half-up, then clip back to $\{1,\dots,5\}$}
\EndFor
\Statex \hspace{1.6em}{\footnotesize\color{gray}\textit{// Example: round-half-up writes $(\hat{s}_2,\hat{s}_4,\hat{s}_1,\hat{s}_3) = (1,3,3,5)$ in chain order from worst to best.}}
\State \Return $(\hat{s}_1,\dots,\hat{s}_m)$
\Statex \hspace{1.6em}{\footnotesize\color{gray}\textit{// Example: original scores are $(s_1,s_2,s_3,s_4) = (2,1,5,3)$; corrected scores become $(\hat{s}_1,\hat{s}_2,\hat{s}_3,\hat{s}_4) = (3,1,5,3)$ in the original score order; the corrected scores satisfy $\hat{s}_2 \le \hat{s}_4 \le \hat{s}_1 \le \hat{s}_3$.}}
\end{algorithmic}
\end{algorithm}

\textbf{Partial-order isotonic correction.} POISE fixes the pairwise labels and rewrites the scores under a partial-order constraint. Declaring $i \prec j$ when $v_{ij}=\mathrm{lose}$ (and $\mathrm{tie}$ as equivalence) turns the pairwise labels into a partial order, and we seek scores close to $\{s_i\}$ that are monotone along this order. The only precondition is that $\{v_{ij}\}$ induces a valid partial order; since RoboReward groups contain at most three items, cycles occur in $<1\%$ of groups and are fixed by expert re-annotation.

With a valid $\{v_{ij}\}$ in hand, we construct the chain order
$r=(r_1,\dots,r_m)$, i.e., a linear extension of the induced partial order,
where $r_i$ is the original score index at chain position $i$ (worst to best). We then solve chain isotonic regression.
Any assignment monotone along this chain is monotone under the
original partial order, so this is a conservative sufficient
constraint for eliminating reversal conflicts:
\begin{equation}
\label{eq:poise-qp}
(\hat{s}'_1, \dots, \hat{s}'_m) \;=\;
\arg\min_{s' \in \mathbb{R}^m}
\sum_{i=1}^{m} \bigl(s'_i - s_{r_i}\bigr)^2
\quad \text{s.t.} \quad
s'_1 \le s'_2 \le \dots \le s'_m,
\end{equation}
which the Pool-Adjacent-Violators iteration \citep{ayer1955empirical} solves exactly in time
complexity of $O(m)$. The solution partitions $\{1,\dots,m\}$ into $K$ consecutive
blocks $B_1,\dots,B_K$ along the chain. Each block pools a run
of chain positions whose raw scores would otherwise violate
the non-decreasing constraint of \eqref{eq:poise-qp}, and all
positions in the block share a single corrected score equal to
the mean of the pooled raw scores. Corrected scores rise between adjacent blocks, so each block is a maximal contiguous segment sharing one corrected value:
\begin{equation}
\label{eq:pav-block}
\hat{s}'_i \;=\; \mu_k \;\equiv\; \frac{1}{|B_k|}\sum_{j \in B_k} s_{r_j},
\qquad i \in B_k.
\end{equation}
With strictly increasing block means $\mu_1 < \mu_2 < \dots < \mu_K$, we apply a single left-to-right pass. Each chain position is initialized as a single block. Whenever the newly added block has a smaller mean than its predecessor, we merge the two blocks and replace them with their weighted-average block mean until adjacent block means are nondecreasing. The real-valued outputs are then clipped and mapped back to the integer score range $\{1,\dots,5\}$, and written to item $r_i$ by $\hat{s}_{r_i}=\max\!\bigl(1,\min(5,\lfloor \hat{s}'_i + \tfrac{1}{2} \rfloor)\bigr)$, preserving the induced chain order and hence the partial order.

Algorithm~\ref{alg:poise} instantiates POISE as a per-group
linear-time step. The
procedure runs independently across groups and twice per
dataset(Score-A$+$Pair-A and
Score-B$+$Pair-B).

\textbf{Data splits.} POISE induces six splits for downstream training. $\mathcal{P}_1$ and $\mathcal{S}_1$ are the raw pairwise and score splits from §\ref{sec:method-mpd}. We then partition them as $\mathcal{P}_1=\mathcal{P}_2\cup\mathcal{P}_3$ and $\mathcal{S}_1=\mathcal{S}_2\cup\mathcal{S}_3$,
where the unions are disjoint. The two sides are produced by different mechanisms:
\textbf{(score side)} $\mathcal{S}_2,\mathcal{S}_3$ come from POISE score editing (pointwise labels unchanged in $\mathcal{S}_2$ and rewritten in $\mathcal{S}_3$);
\textbf{(pairwise side)} $\mathcal{P}_2,\mathcal{P}_3$ come from pairwise-order repair (cycle/transitivity conflicts corrected by expert re-annotation, with unchanged-vs-rewritten labels split into $\mathcal{P}_2$ vs.\ $\mathcal{P}_3$).
Accordingly, $\mathcal{S}_2,\mathcal{P}_2$ are SFT-admissible (teacher rationale still valid), while $\mathcal{S}_3,\mathcal{P}_3$ are RL-only residuals (labels/scores rewritten, so the original rationale may be invalidated).

Our default deployed recipe (\textbf{Ours}) cleans only the score side ($\mathcal{S}_1\!\to\!\mathcal{S}_2,\mathcal{S}_3$) and keeps $\mathcal{P}_1$ intact, because pairwise cycle/transitivity conflicts affect fewer than $1\%$ of groups and are already repaired in pairwise-order preprocessing; detailed routing definitions are listed in Appendix~\ref{app:stats}.


\newtheorem{theorem}{Theorem}
\newtheorem{lemma}{Lemma}
\newtheorem{remark}{Remark}

\textbf{Theoretical guarantee.} We now provide a formal justification for the effectiveness of POISE. Under pairwise-order consistency, POISE's isotonic projection monotonically improves score quality.The detailed proof is provided in Appendix~\ref{app:proof}. The main conclusion is stated below.

\begin{theorem}[Isotonic Alignment Improves Score Quality]
\label{thm:poise-quality}
Let $g\in\{1,\dots,5\}^m$ be the ground-truth golden scores and $s\in\{1,\dots,5\}^m$ the raw noisy scores for a group of $m$ items. Let $\preceq$ be the partial order induced by pairwise labels that is consistent with $g$ (i.e., $i\preceq j \iff g_i\le g_j$). Let $\hat{s}'\in\mathbb{R}^m$ denote the real-valued chain-isotonic regression of $s$ on a linear extension $r$ of $\preceq$, solved by PAVA \emph{before} quantization. Then
\begin{equation}
\|\hat{s}'-g\|_2 \;\le\; \|s-g\|_2.
\label{eq:quality-l2}
\end{equation}
Furthermore, if $s$ violates the order $\preceq$ on at least one comparable pair, the improvement is strict in the squared sense:
\begin{equation}
\|\hat{s}'-g\|_2^2 \;\le\; \|s-g\|_2^2 - \|\hat{s}'-s\|_2^2 \;<\; \|s-g\|_2^2.
\label{eq:quality-strict}
\end{equation}
\end{theorem}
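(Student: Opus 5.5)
The argument is a projection onto a closed convex cone followed by the Pythagorean (obtuse-angle) inequality. Throughout, we index items along the chain, so that $s$ and $g$ are read as $(s_{r_1},\dots,s_{r_m})$ and $(g_{r_1},\dots,g_{r_m})$. Define the monotone cone
\[
\mathcal{K}=\{x\in\mathbb{R}^m : x_1\le x_2\le\dots\le x_m\}.
\]
It is closed and convex, since it is an intersection of finitely many half-spaces. By Eq.~\eqref{eq:poise-qp}, $\hat{s}'$ is exactly the Euclidean projection $\Pi_{\mathcal{K}}(s)$. This projection exists and is unique because the objective is strictly convex and $\mathcal{K}$ is nonempty, closed and convex. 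That PAVA returns this minimizer is taken from \citep{ayer1955empirical}, as the paper already assumes.

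\textbf{Step 1: the ground truth lies in the cone.} Because $r$ is a linear extension of $\preceq$, we have $r_i\preceq r_{i+1}$ or the two are incomparable. We need to show $g_{r_i}\le g_{r_{i+1}}$ for every $i$. The hypothesis $i\preceq j\iff g_i\le g_j$ makes $\preceq$ the total preorder induced by $g$, so any two items are comparable. A linear extension therefore lists items in non-decreasing order of $g$, with ties placed in either order, and hence $g\in\mathcal{K}$. This is the one place where the consistency assumption is used, and it is the step that needs the most care. If $\preceq$ were only a sub-order of the order induced by $g$, a linear extension could place a pair against $g$, and the claim could fail. The statement's "iff" rules this out, so I would state this point explicitly.

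\textbf{Step 2: variational inequality.} For every $y\in\mathcal{K}$, the projection satisfies
\[
\langle s-\hat{s}',\,y-\hat{s}'\rangle\le 0.
\]
This follows from first-order optimality of $\tfrac12\|x-s\|^2$ over a convex set, or equivalently by expanding $\|s-(\hat{s}'+t(y-\hat{s}'))\|^2\ge\|s-\hat{s}'\|^2$ for $t\in[0,1]$ and letting $t\to0^+$.

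\textbf{Step 3: Pythagorean bound.} Set $y=g$ and write $s-g=(s-\hat{s}')+(\hat{s}'-g)$. Then
\[
\|s-g\|^2=\|s-\hat{s}'\|^2+\|\hat{s}'-g\|^2-2\langle s-\hat{s}',\,g-\hat{s}'\rangle .
\]
By Step 2 the cross term is nonnegative, which gives
\[
\|\hat{s}'-g\|^2\le\|s-g\|^2-\|\hat{s}'-s\|^2 .
\]
This is the first inequality in \eqref{eq:quality-strict}. Dropping the nonnegative term $\|\hat{s}'-s\|^2$ and taking square roots yields \eqref{eq:quality-l2}.

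\textbf{Step 4: strictness.} Suppose $s$ violates $\preceq$ on some comparable pair, say $a\preceq b$ with $s_a>s_b$. Because the order is total, either $g_a<g_b$, or $g_a=g_b$ with $a$ placed before $b$ on the chain. In both cases $a$ precedes $b$ on the chain, so $s\notin\mathcal{K}$. Here I rely on the hypothesis being read relative to the chain. For a tie $g_a=g_b$, I interpret "violation" as occurring along the linear extension, and I would add a remark making this interpretation explicit. Since $\hat{s}'\in\mathcal{K}$ and $s\notin\mathcal{K}$, we have $\hat{s}'\ne s$, so $\|\hat{s}'-s\|^2>0$, and the final strict inequality in \eqref{eq:quality-strict} follows.
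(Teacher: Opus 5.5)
Your proposal is correct and follows essentially the same route as the paper's proof. The paper also identifies $\hat{s}'$ as the metric projection of $s$ onto the chain monotone cone, places $g$ in that cone via order consistency, and derives both inequalities from the variational (obtuse-angle) characterization, packaged as a separate projection lemma; strictness likewise comes from $s\notin\mathcal{C}_r$ plus uniqueness of the projection. Your caveat about tied golden scores in Step 4 is well founded: the paper infers $s\notin\mathcal{C}_r$ directly from a violation, which silently requires the violated pair either to be strictly ordered, as in the definition of $\eta_{\mathrm{cross}}$, or to appear in that order along the chain.
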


POISE therefore resolves limitation~2 of Section~\ref{sec:Introduction}: by construction, the training-corpus reversal-conflict rate $\ncross = 0$ after one $O(m)$ sweep per group with no additional teacher queries; together with the quality guarantee in Theorem~\ref{thm:poise-quality}, this shows that POISE removes cross-paradigm reversal inconsistencies while preserving (indeed improving) score quality. Downstream SFT/RL training remains unchanged from the Vanilla-SFT baseline, so any improvement over Vanilla-SFT isolates the effect of our cross-paradigm correction.

\input{exp}

\section{Conclusion}

We introduced TrustRoboReward, a multi-paradigm reward modeling framework for embodied AI that extends RoboReward-style supervision from pointwise trajectory-progress scoring to four complementary paradigms: trajectory scoring, video-QA scoring, and their pairwise counterparts. The central challenge is that multi-paradigm supervision exposes score--preference conflicts, which create noisy and contradictory training signals. To address this, POISE treats pairwise labels as ordering constraints and edits pointwise scores through isotonic projection, producing a self-consistent corpus with provably zero training-time score--pair reversal conflicts under valid pairwise orders. Experiments show that a Qwen3-VL-4B model trained with TrustRoboReward nearly matches GPT-5-mini, substantially improves over open-weight RoboReward baselines, and achieves stronger score--pair consistency. Inference-time TrustJudge further complements POISE and surpasses the teacher overall. Finally, PAIBench-G optimization shows that our reward model provides a stronger embodied task signal while preserving visual quality, suggesting that consistent multi-paradigm supervision is a practical path toward scalable robot reward modeling. We discuss the limitations of the current framework in Appendix~\ref{app:limitations}.

\begin{ack}
Use unnumbered first level headings for the acknowledgments. All acknowledgments
go at the end of the paper before the list of references. Moreover, you are required to declare
funding (financial activities supporting the submitted work) and competing interests (related financial activities outside the submitted work).
More information about this disclosure can be found at: \url{https://neurips.cc/Conferences/2026/PaperInformation/FundingDisclosure}.

Do {\bf not} include this section in the anonymized submission, only in the final paper. You can use the \texttt{ack} environment provided in the style file to automatically hide this section in the anonymized submission.
\end{ack}

\bibliography{main}
\bibliographystyle{unsrt}


\appendix
\section{Limitations}
\label{app:limitations}

TrustRoboReward relies on the assumption that pairwise preferences provide a more reliable ordering signal than pointwise scores. This assumption is supported by our human-correlation study and by prior observations that pairwise judgments often align better with human preferences, but it may not hold uniformly across all robot tasks or video-QA settings. When pairwise labels are systematically biased or incorrect, POISE will preserve the biased order and may propagate the error into the edited pointwise scores. Thus, the quality of pairwise supervision remains a central factor in the reliability of the final reward model.

POISE also assumes that the pairwise labels within each group induce a valid partial order. In our dataset, cycles are rare because each RoboReward group contains at most three items, and the remaining cycle or transitivity conflicts are repaired by expert re-annotation. However, larger candidate groups may produce more complex and frequent ordering conflicts. Extending POISE to large-scale noisy preference graphs without manual cycle repair is an important direction for future work.

Our theoretical guarantee applies to the real-valued isotonic projection before integer quantization and under the assumption that the induced pairwise order is consistent with the golden scores. The final training labels are discrete scores in $\{1,\dots,5\}$, so coordinate-wise rounding can introduce a small additional error. Although this effect is bounded and empirically small in our setting, a fully discrete isotonic projection may be preferable when exact discrete-label guarantees are required.

Our evaluation focuses on RoboReward-derived robot video samples, video-QA supervision, and a sampled subset of PAIBench-G reward-guided optimization tasks. These experiments cover diverse embodied settings, but they do not exhaust all manipulation regimes, long-horizon tasks, real-robot deployment conditions, or safety-critical failure modes. In particular, our downstream validation is based on video generation and human preference comparison rather than closed-loop policy execution on physical robots. Future work should evaluate whether the improved reward signal transfers to real robotic control, interactive policy learning, and out-of-distribution environments.

Finally, TrustRoboReward introduces additional data-construction and training cost compared with single-paradigm distillation. Building the four-paradigm corpus requires teacher queries, consistency analysis, and limited expert re-annotation for invalid pairwise orders, and training the final models requires SFT and GRPO\citep{shao2024deepseekmath} stages. While these costs are modest relative to collecting large-scale human reward annotations, they may still limit rapid iteration or deployment in resource-constrained settings.

\section{Proof of Theorem~\ref{thm:poise-quality}}
\label{app:proof}

For a group of $m$ items, let $r=(r_1,\dots,r_m)$ be the chain order (linear extension of $\preceq$) constructed by POISE, indexing items from worst to best. Define the \emph{chain monotone cone}
\begin{equation}
\mathcal{C}_r \;=\; \bigl\{x\in\mathbb{R}^m : x_{r_1}\le x_{r_2}\le\cdots\le x_{r_m}\bigr\}.
\label{eq:chain-cone}
\end{equation}
Because $r$ respects $\preceq$ (if $i\preceq j$ then $i$ appears no later than $j$ in $r$), any vector monotone along $r$ is automatically monotone under $\preceq$. By the order-consistency assumption (A2), the golden scores satisfy $g_{r_1}\le\cdots\le g_{r_m}$, hence
\begin{equation}
g\in\mathcal{C}_r.
\label{eq:g-in-cone}
\end{equation}

POISE computes the real-valued isotonic regression
\begin{equation}
\hat{s}' \;=\; \arg\min_{x\in\mathcal{C}_r}\|x-s\|_2^2,
\label{eq:pava-projection}
\end{equation}
which is exactly the metric projection of $s$ onto the closed convex cone $\mathcal{C}_r$, denoted $\hat{s}'=P_{\mathcal{C}_r}(s)$.

\begin{lemma}[Projection onto Closed Convex Set]
\label{lem:projection}
Let $\mathcal{C}\subseteq\mathbb{R}^m$ be a non-empty closed convex set. For any $y\in\mathbb{R}^m$ and any $z\in\mathcal{C}$,
\begin{equation}
\|P_{\mathcal{C}}(y)-z\|_2^2 \;\le\; \|y-z\|_2^2 - \|y-P_{\mathcal{C}}(y)\|_2^2.
\end{equation}
\end{lemma}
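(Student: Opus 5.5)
The plan is the standard variational characterization of the metric projection onto a closed convex set. Set $p=P_{\mathcal{C}}(y)$; it exists and is unique because $\mathcal{C}$ is non-empty, closed and convex, and $x\mapsto\|x-y\|_2^2$ is strictly convex and coercive.

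The first step is the obtuse-angle inequality
\begin{equation*}
\langle y-p,\; z-p\rangle \le 0 \qquad \text{for all } z\in\mathcal{C}.
\end{equation*}
To derive it, I would fix $z\in\mathcal{C}$ and use convexity: $p_t=p+t(z-p)\in\mathcal{C}$ for every $t\in[0,1]$. Define $\phi(t)=\|y-p_t\|_2^2=\|y-p\|_2^2-2t\langle y-p,z-p\rangle+t^2\|z-p\|_2^2$. Minimality of $p$ gives $\phi(t)\ge\phi(0)$ for all $t\in[0,1]$, so $\phi'(0^+)=-2\langle y-p,z-p\rangle\ge 0$. If one prefers to avoid derivatives, dividing $\phi(t)-\phi(0)\ge0$ by $t>0$ and letting $t\to0^+$ gives the same conclusion. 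This is the only step with real content, and it is where convexity of $\mathcal{C}$ is used.

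The second step expands $y-z=(y-p)+(p-z)$:
\begin{equation*}
\|y-z\|_2^2=\|y-p\|_2^2+\|p-z\|_2^2+2\langle y-p,\,p-z\rangle .
\end{equation*}
The cross term equals $-2\langle y-p,z-p\rangle$, which is nonnegative by the first step. Dropping it and rearranging yields $\|p-z\|_2^2\le\|y-z\|_2^2-\|y-p\|_2^2$, which is the claim of Lemma~\ref{lem:projection}.

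I expect no serious obstacle. The only point needing care is the one-sided limit in the first step: $t$ is restricted to $[0,1]$, so only the right derivative at $0$ is available. That restriction is why the argument yields an inequality rather than an equality; equality holds, for example, when $\mathcal{C}$ is an affine subspace.

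Applying the lemma with $\mathcal{C}=\mathcal{C}_r$, $y=s$ and $z=g$, which is admissible by \eqref{eq:g-in-cone}, gives \eqref{eq:quality-strict} directly. The strict inequality then follows because $s\notin\mathcal{C}_r$ when $s$ violates the order, so $\hat{s}'\neq s$.
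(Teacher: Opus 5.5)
Your proposal is correct and matches the paper's proof: both use the variational inequality $\langle y-P_{\mathcal{C}}(y),\, z-P_{\mathcal{C}}(y)\rangle\le 0$, expand $y-z=(y-P_{\mathcal{C}}(y))+(P_{\mathcal{C}}(y)-z)$, and drop the nonnegative cross term. The only difference is that the paper simply cites the variational inequality, while you derive it from the one-sided derivative at $t=0^+$ of $t\mapsto\|y-p-t(z-p)\|_2^2$, which makes your argument self-contained.
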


\begin{proof}
By the variational characterization of metric projections onto convex sets,
\begin{equation}
\langle y-P_{\mathcal{C}}(y),\, z-P_{\mathcal{C}}(y)\rangle \;\le\; 0 \qquad\forall z\in\mathcal{C}.
\label{eq:variational}
\end{equation}
Decomposing $y-z = \bigl(y-P_{\mathcal{C}}(y)\bigr) + \bigl(P_{\mathcal{C}}(y)-z\bigr)$ yields
\begin{align}
\|y-z\|_2^2
&= \|y-P_{\mathcal{C}}(y)\|_2^2 + \|P_{\mathcal{C}}(y)-z\|_2^2 + 2\langle y-P_{\mathcal{C}}(y),\, P_{\mathcal{C}}(y)-z\rangle \nonumber\\
&= \|y-P_{\mathcal{C}}(y)\|_2^2 + \|P_{\mathcal{C}}(y)-z\|_2^2 - 2\langle y-P_{\mathcal{C}}(y),\, z-P_{\mathcal{C}}(y)\rangle \nonumber\\
&\ge \|y-P_{\mathcal{C}}(y)\|_2^2 + \|P_{\mathcal{C}}(y)-z\|_2^2,
\label{eq:expand}
\end{align}
where the inequality uses \eqref{eq:variational}. Rearranging \eqref{eq:expand} gives the claim.
\end{proof}

\begin{proof}[Proof of Theorem~\ref{thm:poise-quality}]
Apply Lemma~\ref{lem:projection} with $y=s$, $z=g$, and $\mathcal{C}=\mathcal{C}_r$. Using \eqref{eq:g-in-cone} and \eqref{eq:pava-projection} we obtain
\begin{equation}
\|\hat{s}'-g\|_2^2 \;\le\; \|s-g\|_2^2 - \|\hat{s}'-s\|_2^2 \;\le\; \|s-g\|_2^2,
\end{equation}
which proves \eqref{eq:quality-strict}. Taking square roots gives \eqref{eq:quality-l2}. 

If $s$ violates $\preceq$ on some comparable pair, then $s$ is not monotone along the chain $r$, i.e., $s\notin\mathcal{C}_r$. Because the projection is unique and $\hat{s}'\in\mathcal{C}_r$, we have $\hat{s}'\neq s$ and thus $\|\hat{s}'-s\|_2^2>0$, yielding the strict inequality.
\end{proof}

\begin{remark}[Quantization to Integer Labels]
\label{rem:quantization}
POISE quantizes $\hat{s}'$ back to $\{1,\dots,5\}^m$ by coordinate-wise round-half-up, yielding the final discrete scores $\hat{s}$. While rounding is not a metric projection onto a convex set, the per-coordinate error is bounded by $0.5$. Consequently,
\begin{equation}
\|\hat{s}-g\|_2 \;\le\; \|\hat{s}'-g\|_2 + \frac{\sqrt{m}}{2}.
\end{equation}
In our setting $m\le 3$, so the additional error is at most $0.87$. In practice, the reduction $\|s-\hat{s}'\|_2^2$ from isotonic regression (which is positive exactly when cross-paradigm noise exists) overwhelmingly dominates this small quantization overhead. Hence $\hat{s}$ remains substantially closer to $g$ than $s$ in all empirical regimes. 

If a fully discrete theoretical guarantee is desired, one may replace round-half-up by an exact projection onto the discrete monotone set $\mathcal{C}_r\cap\{1,\dots,5\}^m$; this is computationally trivial for $m\le 3$ and preserves the validity of Theorem~\ref{thm:poise-quality} with $\hat{s}$ in place of $\hat{s}'$.
\end{remark}

\section{Detailed statistical distributions of training and evaluation splits}
\label{app:stats}
We consider four recipes that exhaust the natural choices of which side(s) of the corpus to clean, each specified by its (SFT data, RL data) tuple over $\mathcal{S}_{1,2,3}$ and $\mathcal{P}_{1,2,3}$:
\begin{itemize}
\item $\bigl(\mathcal{S}_1\cup\mathcal{P}_1,\;\text{--}\bigr)$
      --- the raw single-decode distillation corpus is used directly
      for SFT, with no DAG/PAVA cleaning and no RL stage. SFT
      $181{,}692$, RL $0$.
\item $\bigl(\mathcal{S}_2\cup\mathcal{P}_2,\;\mathcal{S}_3\cup\mathcal{P}_3\bigr)$
      --- both pointwise and pairwise sides are cleaned; the residual
      conflict samples on both sides are routed into RL. SFT
      $173{,}335$ ($\mathcal{S}_2 / \mathcal{P}_2$ counts
      $41{,}682 + 42{,}517$ pointwise plus $43{,}816 + 45{,}320$
      pairwise), RL $8{,}357$ ($\mathcal{S}_3 / \mathcal{P}_3$ counts
      $3{,}390 + 2{,}555$ pointwise plus $712 + 1{,}700$ pairwise).
\item $\bigl(\mathcal{S}_2\cup\mathcal{P}_1,\;\mathcal{S}_3\bigr)$
      --- our deployed recipe \textbf{Ours}: clean only the pointwise
      side, keep $\mathcal{P}_1$ intact. SFT $175{,}747$ (pointwise
      $41{,}682 + 42{,}517$, pairwise $44{,}528 + 47{,}020$), RL
      $5{,}945$ (pointwise $3{,}390 + 2{,}555$).
\item $\bigl(\mathcal{S}_1\cup\mathcal{P}_2,\;\mathcal{P}_3\bigr)$
      --- the symmetric counterpart of \textbf{Ours} that cleans only
      the pairwise side. SFT $179{,}280$ (pointwise
      $45{,}072 + 45{,}072$, pairwise $43{,}816 + 45{,}320$), RL
      $2{,}412$ (pairwise $712 + 1{,}700$).
\end{itemize}
We additionally report a sub-row that drops the GRPO stage from
\textbf{Ours},
$\bigl(\mathcal{S}_2\cup\mathcal{P}_1,\;\text{--}\bigr)$, to isolate
the marginal contribution of $\mathcal{S}_3$-RL. All recipes use
Qwen3-VL-4B with $3$ epochs of SFT and (where applicable) $3$ epochs
of GRPO RL.

\section{Supplementary experiments}
\label{app:supp_experiments}

\subsection{Subset-Level Generalization}
\label{app:subset_generalization}
\input{tables/subset_hot_overall}
\paragraph{Subset-level robustness.} Table~\ref{tab:subset-main9-heatmap} visualizes the performance of the nine models configurations across $23$ RoboReward test subsets, using a single global color scale where warmer colors indicate higher scores. \textbf{Ours} exhibits consistently strong performance across heterogeneous robot datasets and is visually close to the proprietary GPT-5-mini teacher on most subsets. In particular, Qwen3-VL-4B with POISE achieves the best open-weight Overall score and remains competitive on both large subsets such as \textsc{robo\_arena} and smaller specialized subsets such as \textsc{berkeley\_mvp}, \textsc{kaist\_nonprehensile}, and \textsc{dlr\_edan}. Compared with raw Qwen3-VL backbones and the single-paradigm RoboReward baselines, the POISE-trained models show a much warmer and more stable pattern across columns, suggesting that the gains are not driven by a single dominant subset but reflect broader cross-dataset generalization. The contrast between POISE and raw-distillation SFT further indicates that multi-paradigm distillation provides strong overall coverage, while our consistency-oriented cleaning improves the reliability of the learned reward signal across diverse embodiment and task distributions.

\input{tables/case_study}

\begin{figure}[htbp]
\centering
\includegraphics[width=\linewidth]{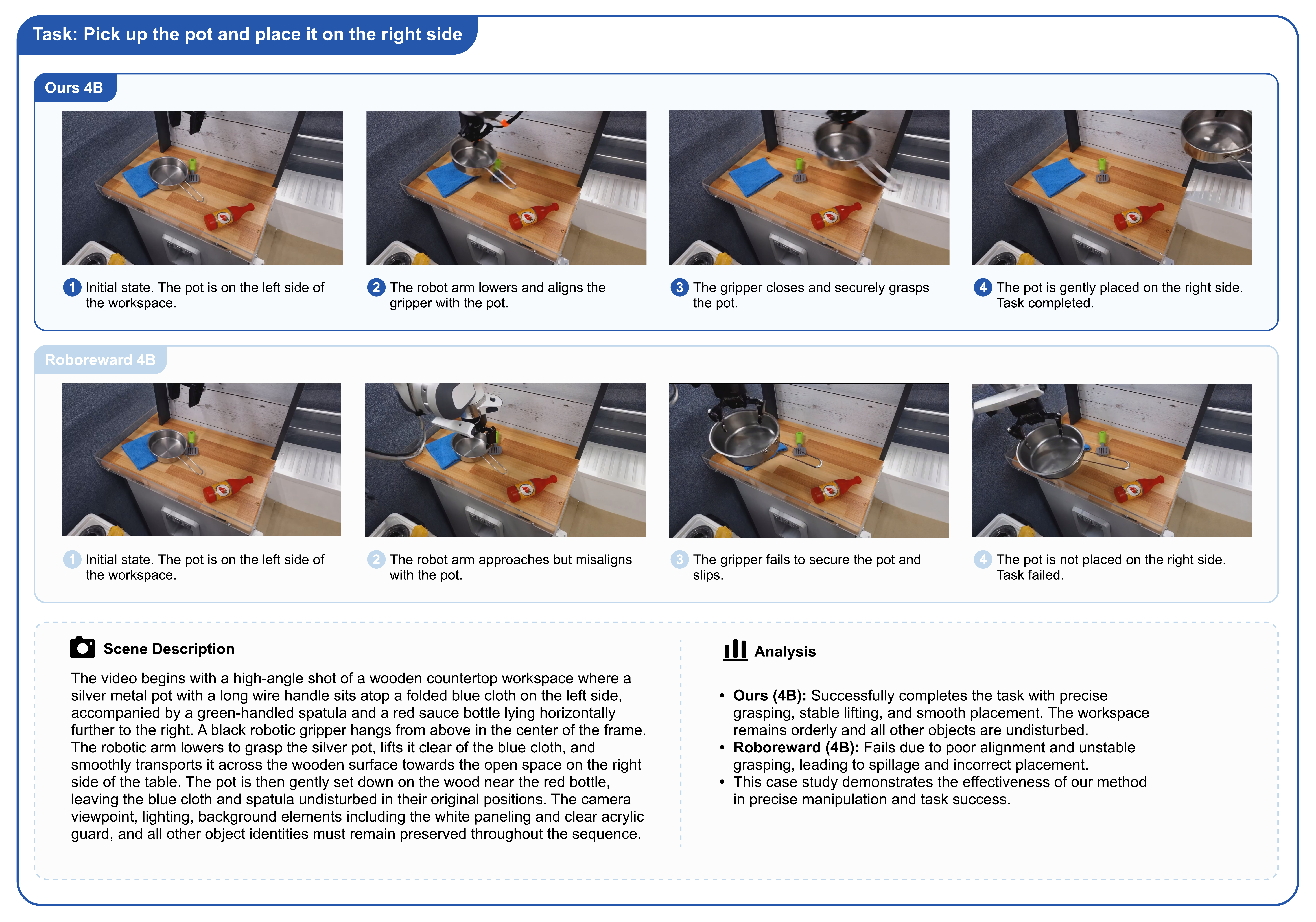}
\caption{Case study comparison between our method and RoboReward on the task of picking up the pot and placing it on the right side. Our method achieves accurate grasping and stable placement, whereas the RoboReward baseline fails due to reward misalignment and unstable manipulation.}
\label{fig:embody_case}
\end{figure}

\subsection{Embodied Case Study}
\label{app:embody_case}

Figure~\ref{fig:embody_case} provides a qualitative comparison on a representative PAIBench-G robot manipulation task. Given the instruction to pick up the pot and place it on the right side, the policy optimized with RoboReward produces visually plausible motion but fails to preserve the task semantics: the grasp is misaligned and the object placement becomes unstable. In contrast, optimization with our reward model yields a trajectory that first establishes a more accurate grasp and then maintains object stability during placement. This example illustrates the same trend observed in Table~\ref{tab:main:core}: our reward model gives a stronger embodied-task signal, especially for semantic correctness and manipulation stability, while preserving comparable visual quality.

\section{Implementation Details}
\label{app:implementation_details}

\subsection{Reward Model and Baseline Training Settings}
\label{app:hparams}

\begin{table}[H]
\caption{Hyperparameters for Supervised Fine-Tuning (SFT) and Group Relative Policy Optimization (GRPO) stages. Both Qwen3-VL-4B and 8B backbones share identical training configurations unless otherwise specified.}
\label{tab:qwen_training_settings}
\centering
\small
\begin{tabularx}{\linewidth}{>{\raggedright\arraybackslash}p{0.32\linewidth}X}
\toprule
\textbf{Hyperparameter} & \textbf{Value} \\
\midrule
\multicolumn{2}{l}{\textbf{General \& Hardware}} \\
\midrule
Backbones & Qwen3-VL-4B-Instruct, Qwen3-VL-8B-Instruct \\
Hardware & 8$\times$ NVIDIA A800 (80 GB) GPUs \\
Trainable parameters & Full-parameter (vision tower frozen; multimodal aligner \& LLM unfrozen) \\
Parallelism \& Memory & FSDP full-shard, gradient checkpointing, FlashAttention-2 \\
Precision & Mixed precision (GRPO Actor uses float16) \\
Optimizer & AdamW, $\beta_1=0.9$, $\beta_2=0.999$, $\epsilon=10^{-6}$ \\
\midrule
\multicolumn{2}{l}{\textbf{Supervised Fine-Tuning (SFT) Stage}} \\
\midrule
Framework & ms-swift\citep{zhao2025swift} \\
Training epochs & 3 (saving checkpoints at each epoch end) \\
Training time (approx.) & $\sim$30 hours (4B) / $\sim$45 hours (8B) \\
Learning rate & $5 \times 10^{-6}$ \\
LR schedule & Cosine \\
Warmup ratio & 0.05 \\
Weight decay & 0.05 \\
Global batch size & 32 (1 sample per device $\times$ 4 grad accumulation $\times$ 8 GPUs) \\
Max sequence length & 32,768 (overlong samples dropped, not truncated) \\
Gradient clipping norm & 1.0 \\
\midrule
\multicolumn{2}{l}{\textbf{Reinforcement Learning (GRPO) Stage}} \\
\midrule
Framework & VeRL\citep{sheng2025hybridflow} \\
Initial policy & Corresponding SFT checkpoint \\
Training epochs & 3 (saving checkpoints at each epoch end) \\
Training time (approx.) & $\sim$20 hours (4B) / $\sim$30 hours (8B) \\
Rollout engine & vLLM (\texttt{gpu\_memory\_utilization=0.75}, tensor parallel size 1) \\
Prompt batch size & 128 \\
Samples per prompt ($n$) & 8 \\
Effective batch size & 1,024 per step \\
Learning rate & $2 \times 10^{-6}$ \\
LR schedule & Constant (no warmup) \\
Max prompt length & 14,336 \\
Max response length & 4,096 \\
PPO clip ratio & 0.2 \\
PPO mini-batch size & 16 \\
PPO epochs per rollout & 1 \\
KL penalty \& Entropy & Disabled (\texttt{use\_kl\_loss=False}, \texttt{kl\_coef=0}, \texttt{entropy\_coeff=0}) \\
Advantage normalization & Intra-group normalization ($\gamma=1$, $\lambda=1$) \\
Gradient clipping norm & 1.0 \\
Additional optimizations & Optimizer state CPU offload (specifically for the 8B backbone) \\
\bottomrule
\end{tabularx}
\end{table}

Table~\ref{tab:qwen_training_settings} lists the full set of hyperparameters and hardware configurations used for instantiating our reward models and baselines. Both the Qwen3-VL-4B and Qwen3-VL-8B backbones share the same hyperparameter setup across the SFT and GRPO training stages, with the only exception being the optimizer state CPU offloading, which is exclusively enabled for the 8B model during GRPO to manage memory constraints.

\paragraph{Compute Resource Usage.} 
All training stages are conducted on a single compute node equipped with 8$\times$ NVIDIA A800 (80 GB) GPUs. For the Qwen3-VL-4B backbone, the SFT stage requires approximately 30 hours, while the GRPO stage takes 20 hours, totaling 400 GPU-hours. For the Qwen3-VL-8B backbone, the SFT and GRPO stages require approximately 45 hours and 30 hours respectively, totaling 600 GPU-hours. Overall, the total compute usage for training the final models reported in this work is approximately 125 node-hours (equivalent to 1,000 A800 GPU-hours).
\subsection{PAIBench-G Robot Reward-Guided Optimization Settings}\label{app:paibench_settings}

\begin{table}[H]
\caption{PAIBench-G robot reward-guided optimization training hyperparameters. Both reward-model variants use the same configuration.}
\label{tab:paibench_settings}
\centering
\small
\begin{tabularx}{\linewidth}{>{\raggedright\arraybackslash}p{0.32\linewidth}X}
\toprule
\textbf{Hyperparameter} & \textbf{Value} \\
\midrule
Backbone & Cosmos-Predict2.5-2B\citep{ali2025world} \\
Algorithm & DiffusionNFT \\
Benchmark & PAIBench-G robot subset \\
Reward & RoboReward / ours \\
Training epochs & 50 \\
Prompt groups & 32 per epoch \\
Group size & 12 \\
Samples per epoch & 384 \\
Batch size & 3 per device \\
Gradient accumulation & 4 \\
Sampling steps & 10 for training, 36 for evaluation \\
Guidance scale & 4.0 for training, 7.0 for evaluation \\
Learning rate & $6\times 10^{-5}$ \\
Optimizer & AdamW, $\beta_1=0.9$, $\beta_2=0.999$, weight decay $10^{-4}$ \\
LoRA & rank 32, alpha 64 \\
Advantage clipping & 2 \\
Training beta & 0.006 \\
Reward beta & 1.0 \\
Precision & bf16 \\
Video format & 93 frames, 432$\times$768, 16 fps \\
Seed & 42 \\
\bottomrule
\end{tabularx}

\end{table}

Table~\ref{tab:paibench_settings} lists the full set of hyperparameters used in the PAIBench-G robot reward-guided optimization experiment. The RoboReward baseline and our method share an identical configuration, differing only in the reward model.

\section{Prompts}
\label{app:prompts}

This appendix reproduces, verbatim, the prompts used throughout our pipeline: at distillation time (querying the GPT-5-mini teacher to build the four-paradigm corpus of §\ref{sec:method-mpd}), at student training time, and at inference/evaluation time. Keeping a single set of prompt templates across training and inference eliminates a common distribution-shift source and ensures that teacher outputs, student training labels, and benchmark evaluations are all directly comparable. All four paradigms share a single system prompt and differ only in the user template. The placeholder \texttt{<video>} marks where the model receives the video frames; \texttt{\{action\_text\}}, \texttt{\{question\}}, \texttt{\{answer\_text\}}, \texttt{\{answer\_a\}}, and \texttt{\{answer\_b\}} are filled in at runtime.

\subsection{Shared System Prompt}
\label{app:prompt_system}

The same system message is prepended to all four paradigms. It pins the teacher to a strict, video-grounded judge persona and enforces the required XML-style answer envelope that our downstream parsers in \texttt{verl\_common.py} (\texttt{parse\_score\_from\_output}, \texttt{parse\_pair\_label\_from\_output}) expect.

\begin{promptbox}{Prompt: Shared System}
You are a strict, impartial, and consistent video judge for robot task
evaluation.
You must evaluate robot performance based solely on what is visible in
the video.
Be concrete and factual. Do not invent objects or events that are not
visible.
Follow the provided rubric exactly and output ONLY in the required XML
format.
\end{promptbox}

\subsection{Score-A: Trajectory Progress Scoring}
\label{app:prompt_scorea}

Score-A asks the teacher to rate a single trajectory's task progress on a $1$-to-$5$ rubric, matching RoboReward's original scoring protocol. The rubric explicitly operationalizes the 3-vs-4 boundary (partial vs.\ near completion) in terms of primary object identity and primary spatial relation, which is what makes Score-A labels consistent enough with Pair-A labels for POISE's isotonic correction to be well-defined.

\begin{promptbox}{Prompt: Score-A User Template}
<video>
Task Instruction:
{action_text}

Scoring Rubric (judge only the final state, ignore time limits):
1 - No Success: The final state shows no goal-relevant change for the
task (e.g., the robot handled a completely different object or made no
meaningful motion).
2 - Minimal Progress: The final state shows a small but insufficient
change toward the goal (e.g., the robot moved toward the target but
did not meaningfully interact with it).
3 - Partial Completion: The final state shows good progress toward the
goal but violates more than one requirement or a major requirement
(e.g., the primary spatial relation is incorrect, or the wrong object
identity was used).
4 - Near Completion: The final state is correct in region and intent
but misses a single minor requirement (e.g., the target object is
correctly placed relative to the reference, but an auxiliary
constraint like gripper release, orientation tolerance, or a secondary
support surface is missed).
5 - Perfect Completion: The final state fully satisfies all the
requirements of the task.

Clarification for scoring 3 vs 4 (MINOR vs MAJOR requirements):
- Treat a requirement as MINOR when the primary object identity and
primary spatial relation to the reference object are satisfied, but
an auxiliary constraint is missed.
- Examples of auxiliary (MINOR) constraints: remaining on the same
support surface, holding/releasing the gripper at the end, small
orientation or placement tolerances, or cosmetic positioning details
that do not change the core relation.
- Use score 4 (Near Completion) if all core elements are correct but
one auxiliary constraint is missed.
- Use score 3 (Partial Completion) only when multiple constraints are
missed or when the missed constraint is central to the task identity.

Important:
- Judge ONLY the final state shown in the video, not intermediate
steps.
- Be concrete and factual. Do not invent objects or events not visible
in the video.
- Pay special attention to the last frame for the final state
assessment.

Return an integer score from 1 to 5 and a brief reason (1-2 sentences).
Output format:
<reason>...</reason>
<answer>Score: [X]</answer>
\end{promptbox}

\subsection{Score-B: Video-QA Answer Quality Scoring}
\label{app:prompt_scoreb}

Score-B rates a candidate answer to a grounding question on the same $1$-to-$5$ scale, forcing the judge to verify objects, actions, and states rather than rely on surface textual cues. Hallucination is explicitly named as the primary failure mode in the rubric.

\begin{promptbox}{Prompt: Score-B User Template}
<video>
Question:
{question}

Candidate Answer:
{answer_text}

Answer Quality Rubric:
1 - Completely incorrect: The answer is entirely wrong, hallucinated,
or not grounded in the video content. It contradicts what is visible
or fabricates objects/events that do not appear.
2 - Slightly relevant: The answer shows some awareness of the video
topic but is mostly wrong, missing key details, or includes
significant hallucinations.
3 - Partially correct: The answer captures some correct elements but
has notable errors, misidentifies important objects or actions, or
omits important points that are clearly visible in the video.
4 - Mostly correct: The answer is largely accurate and grounded in the
video, with only minor omissions, imprecise descriptions, or trivial
inaccuracies.
5 - Completely correct: The answer is fully accurate, well-grounded in
the video, concise, and covers all key objects and actions relevant
to the question.

Important:
- Evaluate the answer strictly based on what is visible in the video.
- Penalize hallucinated content: objects, actions, or details not
present in the video.
- Consider factual accuracy, visual grounding, and coverage of key
details.

Return an integer score from 1 to 5 and a brief reason (1-2 sentences).
Output format:
<reason>...</reason>
<answer>Score: [X]</answer>
\end{promptbox}

\subsection{Pair-A: Dual-Trajectory Preference}
\label{app:prompt_paira}

Pair-A compares two robot execution videos on the same task and emits a label in $\{\texttt{[A]},\texttt{[B]},\texttt{[C]}\}$ (A wins / B wins / tie), providing the pairwise format that DPO and Bradley-Terry pipelines consume natively. Two design choices in this prompt are relevant to the main paper:
\begin{itemize}
\item The prompt instructs the teacher to analyze \emph{Candidate B first, then Candidate A}, which is our cheap position-bias mitigation at distillation time and complements the symmetric \texttt{order\_ab}/\texttt{order\_ba} aggregation in §\ref{sec:method-mpd}.
\item Tie (\texttt{[C]}) is defined restrictively---``both videos genuinely show the same level of task completion''---so that Pair-A labels remain informative partial-order constraints for POISE's isotonic correction (§\ref{sec:method-poise}) rather than collapsing into uninformative ties.
\end{itemize}

\begin{promptbox}{Prompt: Pair-A User Template}
=== Video A (frames below) ===
<video>
=== Video B (frames below) ===
<video>
The FIRST video corresponds to Candidate A.
The SECOND video corresponds to Candidate B.

Task Instruction:
{action_text}

You are comparing two robot execution videos for the same task.
Choose the video that shows better task completion. Your evaluation
should consider:
- Whether the correct object was manipulated
- Whether the target spatial relation was achieved
- Whether all task requirements were satisfied
- The overall degree of progress toward the goal

The final outcome (what state the task is in at the end) is the
primary criterion. The execution process (efficiency, smoothness,
side effects) serves as a secondary tiebreaker when both candidates
reach similar final states.

IMPORTANT: To avoid order bias, analyze Candidate B FIRST, then
Candidate A.

Step 1: Watch Candidate B fully. Describe what the robot did and what
the final state looks like.
Step 2: Watch Candidate A fully. Describe what the robot did and what
the final state looks like.
Step 3: Compare the two. Which video shows more progress toward
completing the task? If one clearly achieved more, choose it.
Only declare [C] (tie) when both videos genuinely show the same level
of task completion and you cannot identify any meaningful difference.

Be concrete and factual - do not invent objects or events not visible
in the videos. Avoid any position biases and ensure that the order in
which the videos were presented does not influence your judgment.

Output format:
<analysis_B>Candidate B: what the robot did and the final state
</analysis_B>
<analysis_A>Candidate A: what the robot did and the final state
</analysis_A>
<reason>Brief comparison - which achieved more toward the task goal
</reason>
<answer>[A]</answer> or <answer>[B]</answer> or <answer>[C]</answer>
\end{promptbox}

\subsection{Pair-B: Dual-Answer Preference}
\label{app:prompt_pairb}

Pair-B compares two candidate answers to the same grounding question with the same $\{\texttt{[A]},\texttt{[B]},\texttt{[C]}\}$ label vocabulary as Pair-A. The rubric elevates factual accuracy and absence of hallucination over coverage or length so that Pair-B agrees, on jointly-labeled items, with Score-B's hallucination-penalizing rubric in §\ref{app:prompt_scoreb}. The same reverse-order instruction as in Pair-A is applied to mitigate order bias.

\begin{promptbox}{Prompt: Pair-B User Template}
<video>
Question:
{question}

Candidate A Answer:
{answer_a}

Candidate B Answer:
{answer_b}

You are comparing two candidate answers to the above question about
the video.
Choose the answer that better addresses the question. Your evaluation
should consider:
- Factual accuracy: does the answer correctly describe what happened
in the video?
- Absence of hallucination: does the answer avoid fabricating objects,
actions, or details not visible?
- Visual grounding: is the answer based on what is actually shown in
the video?
- Coverage: does the answer address the key objects and actions
relevant to the question?

Factual accuracy and absence of hallucination are the primary
criteria. Coverage and level of detail serve as secondary tiebreakers
when both candidates are equally accurate.
Do not let the length of the answers influence your evaluation.
Evaluate strictly based on what is visible in the video.

IMPORTANT: To avoid order bias, analyze Candidate B FIRST, then
Candidate A.

Step 1: Read Candidate B. Assess its factual accuracy and whether it hallucinates.
Step 2: Read Candidate A. Assess its factual accuracy and whether it
hallucinates.
Step 3: Compare the two. Which answer is more accurate and better
grounded in the video? If one is clearly more accurate, choose it.
Only declare [C] (tie) when both answers are genuinely equal in
quality and you cannot identify any meaningful difference.

Be concrete and factual - do not invent details not visible in the
video. Avoid any position biases and ensure that the order in which
the answers were presented does not influence your judgment.

Output format:
<analysis_B>Candidate B: accuracy assessment and key observations
</analysis_B>
<analysis_A>Candidate A: accuracy assessment and key observations
</analysis_A>
<reason>Brief comparison - which answer is more accurate and complete
</reason>
<answer>[A]</answer> or <answer>[B]</answer> or <answer>[C]</answer>
\end{promptbox}


\newpage

\end{document}

%% file: exp.tex
\section{Experiments}
\label{sec:exp}

We evaluate our approach in five stages. First, we detail the experimental setup including datasets, baselines, and evaluation metrics (\S\ref{sec:exp:preliminaries}). Next, we present main results comparing our models against proprietary and open-source baselines (\S\ref{sec:exp:experiments}). We then conduct ablations on data-routing, $\mathcal{S}_3$-RL contribution, and TrustJudge realignment mechanisms. Fourth, we validate our reward models through a human correlation study. Finally, we demonstrate our reward model's superiority in real-world embodied AI scenarios, showing it provides winning guidance for robot policy optimization. Additionally, supplementary experiments are provided in Appendix~\ref{app:supp_experiments}, including a detailed performance heatmap across all test sub-datasets in Appendix~\ref{app:subset_generalization}.
\subsection{Preliminaries}
\label{sec:exp:preliminaries}
\textbf{Datasets and Splits.}
We conduct our experiments using a total of $181{,}692$ distillation training samples and a held-out gold set of $7{,}030$ human-annotated test samples. The construction of specific tasks (Score-A/B and Pair-A/B) from the RoboReward dataset is detailed in \S\ref{sec:method}. Following POISE pipeline as described in \S\ref{sec:method}, the raw pointwise ($\mathcal{S}_1$) and pairwise ($\mathcal{P}_1$) samples are processed into cleaned SFT-admissible subsets ($\mathcal{S}_2, \mathcal{P}_2$) and RL-only residual subsets ($\mathcal{S}_3, \mathcal{P}_3$). For detailed statistical distributions of all training and evaluation splits, please refer to Appendix \ref{app:stats}.

\textbf{Models and Baselines.}
We instantiate our reward models using \textbf{Qwen3-VL-4B} and \textbf{Qwen3-VL-8B} backbones. We evaluate against three baselines: 
(i) \textbf{Raw-distillation SFT} $(\mathcal{S}_1\cup\mathcal{P}_1,\,\text{--})$, representing the same backbones trained on uncleaned multi-paradigm distillation data; 
(ii) \textbf{RoboReward}~\citep{lee2026roboreward}, the dominant prior recipe; and 
(iii) \textbf{GPT-5-mini}, which serves as a soft upper bound for the recoverable performance under our distillation setup.
We also include the un-tuned Qwen3-VL backbones as a reference floor to quantify the absolute headroom. Implementation details are provided in Appendix~\ref{app:implementation_details}, including detailed hyperparameters, training specifics, and compute resource usage in Appendix~\ref{app:hparams}.

\textbf{Metrics.}
We index the four sub-tasks by $t\in\{\textsc{Score-A},\textsc{Score-B}, \textsc{Pair-A},\textsc{Pair-B}\}$, where \textsc{A} denotes the trajectory-progress modality and \textsc{B} denotes the video-QA modality. The performance is evaluated using the following metrics:

\begin{itemize}[leftmargin=*]
    \item \textbf{Pointwise Scores (\textsc{Score-A} and \textsc{Score-B}):} 
    Let $\mathcal{D}_t$ be the pointwise test set for $t\in\{\textsc{Score-A},\textsc{Score-B}\}$. For each example $(x_i,s_i) \in \mathcal{D}_t$, $x_i$ denotes the input context, $s_i \in \{1,\dots,5\}$ is the integer ground-truth rating, and $\hat s_i \in \{1,\dots,5\}$ represents the predicted rating from the reward model. We normalize by the maximal possible error on the 1 to 5 scale (which is 4) and scale the results by 100 so that the scores lie in $[0,100]$: {\boldmath $\displaystyle \textbf{\textsc{Score}}\text{-}t = 100 \times \left( 1 \;-\; \frac{1}{4\,\lvert\mathcal{D}_t\rvert} \sum_{(x_i,s_i)\in\mathcal{D}_t} \bigl\lvert\hat s_i - s_i\bigr\rvert \right), \quad t \in \{\textsc{Score-A}, \textsc{Score-B}\}$.}

    \item \textbf{Pairwise Accuracies (\textsc{Pair-A} and \textsc{Pair-B}):}
Let $\mathcal{D}_t$ be the pairwise test set for
$t\in\{\textsc{Pair-A},\textsc{Pair-B}\}$, where each example consists of
an input $x_i$, two candidate responses, and a dataset label $z_i\in\{\mathrm{win},\mathrm{lose},\mathrm{tie}\}$
indicating whether the first response wins, loses, or ties against the second response. Let $\hat z_i$ be the label predicted by the
reward model. The accuracy is calculated as:
{\boldmath $\displaystyle
\textbf{\textsc{Pair}}\text{-}t
=
100 \times
\frac{1}{\lvert\mathcal{D}_t\rvert}
\sum_{i=1}^{\lvert\mathcal{D}_t\rvert}
\mathbf{1}\!\bigl[\hat z_i=z_i\bigr],
\quad
t\in\{\textsc{Pair-A},\textsc{Pair-B}\}.
$}

    \item \textbf{Paradigm Aggregates ($\textsc{Score}_{\mathrm{AB}}$ and $\textsc{Pair}_{\mathrm{AB}}$):} 
    The two paradigm aggregates are formed by sample-weighted (micro) averaging. We calculate this by summing the sub-task numerators and denominators rather than averaging the sub-task ratios. This ensures that the smaller of the two test sets is not over-weighted when the modalities are unbalanced. Let $M \in \{\textsc{Score}, \textsc{Pair}\}$ denote the metric type, and let $\mathcal{D}_t$ be the respective test set for sub-task $t$. The aggregated metric $M_{\mathrm{AB}}$ is computed as: {\boldmath $\displaystyle M_{\mathbf{AB}} = \frac{ \lvert\mathcal{D}_{M\text{-A}}\rvert \cdot M\text{-A} + \lvert\mathcal{D}_{M\text{-B}}\rvert \cdot M\text{-B} }{ \lvert\mathcal{D}_{M\text{-A}}\rvert + \lvert\mathcal{D}_{M\text{-B}}\rvert }, \quad M \in \{\textsc{Score}, \textsc{Pair}\}$.}

    \item \textbf{Overall Score:} 
    The \textbf{Overall} score is the equally weighted average of the micro-averaged pointwise score ($\textsc{Score}_{\mathrm{AB}}$) and the micro-averaged pairwise accuracy ($\textsc{Pair}_{\mathrm{AB}}$) over both modalities. It is calculated as: {\boldmath $\displaystyle \textbf{\textsc{Overall}} = (\textbf{\textsc{Score}}_{\mathbf{AB}} + \textbf{\textsc{Pair}}_{\mathbf{AB}}) / 2$.}

    \item \textbf{Cross-Paradigm Self-Consistency ($\mathrm{Cons.}$):} 
    This metric measures whether the ordering implied by the pointwise scores agrees with the direct pairwise judgments on the same comparable pairs. Following TrustJudge, we calculate the score-comparison inconsistency rate, also known as the Conflict Ratio ($\mathrm{CR}$). Let $\mathrm{CR}_{\textsc{A}}$ and $\mathrm{CR}_{\textsc{B}}$ denote the inconsistency rates on \textsc{A} and \textsc{B}, respectively. We define the overall consistency as one minus their average: {\boldmath $\mathrm{Cons.} = 100 \times \left( 1 - \frac{\mathrm{CR}_{\textsc{A}} + \mathrm{CR}_{\textsc{B}}}{2} \right)$.}
\end{itemize}
\subsection{Experiments}
\label{sec:exp:experiments}
\input{tables/main_core.tex}
\textbf{Main comparison.}
Table~\ref{tab:main:core} reports the head-to-head comparison of our method against baselines on the Qwen3-VL-4B and Qwen3-VL-8B architectures. Overall, \textbf{Ours 4B} strictly matches the performance of the proprietary teacher, GPT-5-mini, within a $0.13$ points margin. Furthermore, it outperforms every open-weight baseline at both scales and delivers the most substantial gains in cross-paradigm self-consistency.
The key findings are as follows:

\begin{itemize}[leftmargin=*]
\item \textbf{Approaching the performance of proprietary teachers.} Remarkably, Qwen3-VL-4B trained with our methodology reaches an Overall score of $77.96$, trailing the massive GPT-5-mini teacher ($78.09$) by a mere $0.13$ points. It even surpasses the teacher on the Pair-A sub-task ($83.18$ vs.\ $82.44$). This demonstrates that multi-paradigm distillation establishes a robust, highly competitive foundation for absolute task accuracy.

In summary, multi-paradigm distillation lifts the 4B open-weight student to teacher-level accuracy and narrows the observed 4B-8B gap under our protocol. Human-annotated cycle-breaking removes structural conflicts in distillation data, suggesting reliable vision-language reward modeling depends more on supervision coverage and targeted data correction than on parameter scaling alone.

\item \textbf{Driving self-consistency via POISE after human-annotated cycle-breaking.} Naively concatenating pointwise ($\mathcal{S}_1$) and pairwise ($\mathcal{P}_1$) datasets introduces conflicting reward signals, which causes outputs to disagree on roughly one-third of all evaluated pairs. By leveraging targeted human annotation to break these logical cycles, we successfully reduce this residual disagreement to approximately one-quarter. This mechanism drives massive improvements in cross-paradigm self-consistency ($\mathrm{Cons.}$), where scores jump from $56.14 \to 63.66 \to 73.58$ on the $8$B backbone (a $+17.44$ total gain) and from $57.26 \to 64.81 \to 71.90$ on the $4$B backbone, all while maintaining robust performance without compromising absolute task accuracy.
\end{itemize}


In summary, multi-paradigm distillation lifts 4B open-weight student to teacher-level accuracy and narrows the performance gap between the 4B and 8B models measured under our evaluation protocol. Human-annotated cycle-breaking further removes structural conflicts in the distillation data, suggesting that reliable vision-language reward modeling depends more on broad supervision coverage and targeted data correction than on parameter scaling alone.


\input{tables/main_setting4b.tex}

\textbf{Data-routing ablation.} 
Table~\ref{tab:main:4b} evaluates four (SFT, RL) data-routing recipes on Qwen3-VL-4B; their explicit tuple definitions are given in Appendix~\ref{app:stats}. Our optimal configuration, \textbf{Ours 4B} achieves a top score of $77.96$. This performance surpasses the fully-cleaned ($77.53$), raw distillation ($77.09$), and pairwise-only-cleaned ($75.48$) alternatives. Notably, substituting $\mathcal{P}_1$ with the refined $\mathcal{P}_2$ in the SFT stage slightly degrades performance. This suggests that with pointwise data cleaned to $\mathcal{S}_2$, the self-consistent anchors effectively mitigate residual cycles in $\mathcal{P}_1$, allowing its superior diversity to outweigh the cycle costs. We identify a critical asymmetry: while cleaning the pointwise side ($\mathcal{S}_1\to\mathcal{S}_2$) is essential, cleaning the pairwise side ($\mathcal{P}_1\to\mathcal{P}_2$) imposes a sub-optimal trade-off between diversity and consistency.

\textbf{$\mathcal{S}_3$-RL contribution.} 
Ablating the GRPO stage (\emph{Ours w/o RL}) reveals that $\mathcal{S}_3$-RL serves as a targeted lever for score--pair alignment instead of a general accuracy booster. While the RL stage yields a modest $1.37$ point gain in Overall accuracy ($76.59 \to 77.96$), it triggers a substantial $+4.8$ point surge in self-consistency ($\mathrm{Cons.}: 67.08 \to 71.90$). This disproportionate impact is approximately $3.5\times$ greater on consistency than on accuracy, which validates the design of $\mathcal{S}_3$.

\textbf{POISE is more effective than training-time TrustJudge.} Table~\ref{tab:ablation:tj} shows a clear stage difference. Training-time TrustJudge lowers \textbf{Ours 4B} from $77.96$ to $77.02$ Overall, mainly through pointwise drops (Score-A $-3.0$, Score-B $-0.9$). This suggests that training-time TrustJudge may be less effective than POISE, as it samples multiple TrustJudge judgments for each training example, aggregates them into a single corrected label, and uses that label as the SFT target, which may reduce label diversity and weaken the benefits of probability-based aggregation. In contrast, inference-time TrustJudge raises \textbf{Ours 4B} to $78.57$ Overall, achieves the best Pair-A result ($85.27$), and also improves RoboReward-4B and raw Qwen3-VL-4B. Its slight Cons. drop after RL/GRPO likely comes from reduced output entropy: a more confident model leaves less useful uncertainty for probability-based aggregation, making TrustJudge less effective and slightly weakening score--pair self-consistency.

\begin{wrapfigure}{r}{0.5\textwidth}

\centering
\includegraphics[width=0.5\textwidth]{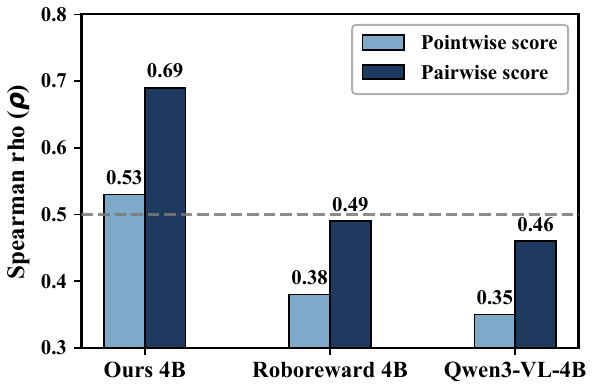}

\caption{Spearman $\rho$ between reward-model judgments and human
annotations, broken down by output format. Dashed line: conventional
strong-correlation threshold $\rho{=}0.50$. Higher is better.}
\label{fig:human-corr}

\end{wrapfigure}
\textbf{Human-annotation correlation.}


\input{tables/ablation_trustjudge.tex}


To evaluate reward-model reliability beyond automatic metrics, we sample a subset of $1{,}000$ random test instances with fine-grained human preference ratings to report the Spearman correlation $\rho$ with human preferences. Figure~\ref{fig:human-corr} shows that \textbf{Ours 4B} achieves the highest correlation in both formats ($\rho=0.42$ pointwise, $\rho=0.55$ pairwise) and is the only $4$B-scale model exceeding the $\rho=0.50$ strong-correlation threshold. Compared to the RoboReward baseline ($0.38, 0.49$), \textbf{Ours 4B} improves by $+0.04$ and $+0.06$, respectively. Across all models, pairwise outputs correlate more strongly with humans than pointwise ones, matching prior findings that pairwise judgments yield higher agreement in Table~\ref{tab:main:core}. This larger pairwise gain suggests our cleaning pipeline strengthens the model precisely where traditional single-paradigm supervision is most vulnerable.

\textbf{Reward-Guided Optimization on PAIBench-G Robot Tasks.}

We evaluate reward-guided policy optimization on a sampled subset of PAIBench-G robot tasks. The task is image-to-video prediction from an initial robot observation and a manipulation instruction. We conduct a head-to-head comparison between our 4B reward model and RoboReward-4B over 68 paired comparisons, judged by 3 annotators with majority vote. Our method achieves approximately 69\% win rate against RoboReward-4B, demonstrating that our reward model provides a better optimization signal for embodied task completion. More parameter details and a qualitative example are provided in Appendix~\ref{app:paibench_settings} and Figure~\ref{fig:embody_case}.

%% file: tables/main_core.tex
\begin{table}[t]
\centering
\caption{Main comparison on the four sub-tasks. ``--'' denotes that the corresponding stage is
skipped, and all trained rows use $3$ epochs for each stage that is
run. }
\label{tab:main:core}
\setlength{\tabcolsep}{3.4pt}
\small
\resizebox{\textwidth}{!}{%
\begin{tabular}{llllccccccccc}
\toprule
\textbf{Model} & \textbf{Recipe} & \textbf{SFT data} & \textbf{RL data} & \textbf{Overall} & \textbf{scoreAB} & \textbf{pairAB} & \textbf{Score-A} & \textbf{Score-B} & \textbf{Pair-A} & \textbf{Pair-B} & \textbf{Cons.} \\
\midrule
\multicolumn{12}{l}{\emph{Proprietary teacher (upper-bound reference)}} \\
GPT-5-mini   & teacher                        & --                                         & --                                         & 78.09 & 74.52 & 81.65 & 75.05 & 73.98 & 82.44 & 80.89 & 68.09 \\
\midrule
\multicolumn{12}{l}{\emph{Ours}} \\
Qwen3-VL-4B     & Ours                           & $\mathcal{S}_2 \cup \mathcal{P}_1$         & $\mathcal{S}_3$                            & \textbf{77.96} & \underline{74.49} & \textbf{81.43} & \underline{75.98} & \underline{73.00} & \textbf{83.18} & \underline{79.74} & \underline{71.90} \\
Qwen3-VL-8B     & Ours                           & $\mathcal{S}_2 \cup \mathcal{P}_1$         & $\mathcal{S}_3$                            & \underline{77.46} & \textbf{74.51} & \underline{80.41} & \textbf{75.59} & \textbf{73.44} & \underline{80.80} & \textbf{80.03} & \textbf{73.58} \\
\midrule
\multicolumn{12}{l}{\emph{Baselines}} \\
Qwen3-VL-8B     & raw-distillation SFT           & $\mathcal{S}_1 \cup \mathcal{P}_1$         & --                                         & 77.21 & 73.79 & 80.63 & 73.70 & 73.87 & 83.04 & 78.30 & 63.66 \\
Qwen3-VL-4B     & raw-distillation SFT           & $\mathcal{S}_1 \cup \mathcal{P}_1$         & --                                         & 77.09 & 73.18 & 80.99 & 73.15 & 73.20 & 82.44 & 79.60 & 64.81 \\
Qwen3-VL-8B     & RoboReward                     & --                            & --                                         & 69.82 & 74.28 & 65.35 & 77.95 & 70.61 & 60.86 & 69.68 & 56.14 \\
Qwen3-VL-4B     & RoboReward                     & --                            & --                                         & 67.83 & 72.58 & 63.08 & 74.13 & 71.04 & 58.33 & 67.67 & 57.26 \\
\midrule
\multicolumn{12}{l}{\emph{Reference floor (no fine-tune)}} \\
Qwen3-VL-4B     & --                             & --                                         & --                                         & 62.06 & 66.31 & 57.82 & 65.96 & 66.65 & 55.80 & 59.77 & 62.20 \\
Qwen3-VL-8B     & --                             & --                                         & --                                         & 60.47 & 65.47 & 55.48 & 71.80 & 59.13 & 52.83 & 58.05 & 54.87 \\
\bottomrule
\end{tabular}}
\end{table}

%% file: tables/main_setting4b.tex
\begin{table}[t]
\centering
\caption{Effect of the SFT and RL data routing on Qwen3-VL-4B. ``--'' denotes that the corresponding stage is skipped. Sub-row drops the GRPO stage from \textbf{Ours} to isolate the
contribution of $\mathcal{S}_3$-RL. All rows use $3$ epochs for each stage that is run. }
\label{tab:main:4b}
\setlength{\tabcolsep}{3.6pt}
\small
\resizebox{\textwidth}{!}{%
\begin{tabular}{lllccccccccc}
\toprule
\textbf{Recipe} & \textbf{SFT data} & \textbf{RL data} & & \textbf{Overall} & \textbf{scoreAB} & \textbf{pairAB} & \textbf{Score-A} & \textbf{Score-B} & \textbf{Pair-A} & \textbf{Pair-B} & \textbf{Cons.} \\
\midrule
Raw distillation        & $\mathcal{S}_1 \cup \mathcal{P}_1$ & --                                 & & 77.09          & 73.18          & \underline{80.99} & 73.15             & 73.20          & \underline{82.44} & \underline{79.60} & 64.81 \\
Fully-cleaned           & $\mathcal{S}_2 \cup \mathcal{P}_2$ & $\mathcal{S}_3 \cup \mathcal{P}_3$ & & \underline{77.53} & \textbf{75.02} & 80.04          & \textbf{76.04}    & \textbf{73.99} & 81.10          & 79.02          & \underline{69.79} \\
Pairwise-only-cleaned   & $\mathcal{S}_1 \cup \mathcal{P}_2$ & $\mathcal{P}_3$                    & & 75.48          & 72.17          & 78.80          & 72.07             & 72.26          & 79.76          & 77.87          & 64.94 \\
\emph{Ours w/o RL}      & $\mathcal{S}_2 \cup \mathcal{P}_1$ & --                                 & & 76.59          & 72.49          & 80.70          & 73.04             & 71.94          & 82.89          & 78.59          & 67.08 \\
\midrule
\textbf{Ours}           & $\mathcal{S}_2 \cup \mathcal{P}_1$ & $\mathcal{S}_3$                    & & \textbf{77.96} & 74.49          & \textbf{81.43} & \underline{75.98} & 73.00          & \textbf{83.18} & \textbf{79.74} & \textbf{71.90} \\
\bottomrule
\end{tabular}}
\end{table}

%% file: tables/ablation_trustjudge.tex
\begin{table}[t]
\centering
\caption{TrustJudge ablation on Qwen3-VL-4B. We compare applying TrustJudge alignment to the training-time labels, to the inference-time outputs, or not at all (``--'').}
\label{tab:ablation:tj}
\setlength{\tabcolsep}{3.4pt}
\small
\resizebox{\textwidth}{!}{%
\begin{tabular}{llcccccccc}
\toprule
\textbf{Reward model} & \textbf{TrustJudge} & \textbf{Overall} & \textbf{scoreAB} & \textbf{pairAB} & \textbf{Score-A} & \textbf{Score-B} & \textbf{Pair-A} & \textbf{Pair-B} & \textbf{Cons.} \\
\midrule
Ours              & inference & \textbf{78.57} & \textbf{74.54} & \textbf{82.60} & \textbf{76.08} & \textbf{73.00} & \textbf{85.27} & \textbf{80.03} & \underline{68.81} \\
Ours              & --        & \underline{77.96}          & \underline{74.49}          & 81.43          & \underline{75.98}          &\textbf{ 73.00} & 83.18          & \underline{79.74}          & \textbf{71.90} \\
Ours              & training  & 77.02          & 72.54          & \underline{81.51}          & 72.94          & 72.14 & \underline{83.33}          & 79.74          & 62.02 \\
\midrule
RoboReward        & inference & 70.29          & 73.03          & 67.54          & 74.95          & 71.11 & 61.16          & 73.71          & 65.55 \\
Qwen3-VL-4B (raw) & inference & 64.56          & 66.11          & 63.01          & 65.69          & 66.52 & 61.31          & 64.66          & 60.46 \\
\bottomrule
\end{tabular}}
\end{table}

%% file: tables/subset_hot_overall.tex
\begin{table*}[!htbp]
\centering
\caption{\textbf{Subset-level generalization across RoboReward test subsets.} (a) maps each subset ID to its corresponding RoboReward test subset and sample count. (b) reports Overall and per-subset scores for the nine main-table configurations. Colors in (b) are normalized globally over all displayed values, with warmer colors indicating higher performance.}
\label{tab:subset-main9-heatmap}
\scriptsize
\setlength{\tabcolsep}{5.0pt}
\renewcommand{\arraystretch}{1.08}

\textbf{(a) Subset index.}

\begin{adjustbox}{width=\textwidth}
\begin{tabular}{@{}c l r c l r@{}}
\toprule
Column & Subset & $n$ & Column & Subset & $n$ \\
\midrule
S01 & \texttt{robo\_arena} & 2000 & S13 & \texttt{cmu\_play\_fusion} & 236 \\
S02 & \texttt{fractal20220817\_data} & 356 & S14 & \texttt{austin\_sirius\_dataset\_converted\_externally\_to\_rlds} & 228 \\
S03 & \texttt{ucsd\_kitchen\_dataset\_converted\_externally\_to\_rlds} & 326 & S15 & \texttt{taco\_play} & 228 \\
S04 & \texttt{bridge} & 292 & S16 & \texttt{berkeley\_rpt\_converted\_externally\_to\_rlds} & 222 \\
S05 & \texttt{berkeley\_autolab\_ur5} & 250 & S17 & \texttt{iamlab\_cmu\_pickup\_insert\_converted\_externally\_to\_rlds} & 218 \\
S06 & \texttt{jaco\_play} & 250 & S18 & \texttt{utokyo\_xarm\_bimanual\_converted\_externally\_to\_rlds} & 202 \\
S07 & \texttt{roboturk} & 250 & S19 & \texttt{utokyo\_pr2\_tabletop\_manipulation\_converted\_externally\_to\_rlds} & 182 \\
S08 & \texttt{berkeley\_fanuc\_manipulation} & 248 & S20 & \texttt{berkeley\_mvp\_converted\_externally\_to\_rlds} & 178 \\
S09 & \texttt{droid} & 248 & S21 & \texttt{viola} & 156 \\
S10 & \texttt{tokyo\_u\_lsmo\_converted\_externally\_to\_rlds} & 248 & S22 & \texttt{kaist\_nonprehensile\_converted\_externally\_to\_rlds} & 142 \\
S11 & \texttt{ucsd\_pick\_and\_place\_dataset\_converted\_externally\_to\_rlds} & 248 & S23 & \texttt{dlr\_edan\_shared\_control\_converted\_externally\_to\_rlds} & 80 \\
S12 & \texttt{stanford\_hydra\_dataset\_converted\_externally\_to\_rlds} & 242 & & & \\
\bottomrule
\end{tabular}
\end{adjustbox}

\setlength{\tabcolsep}{2.0pt}

\textbf{(b) Subset-level performance heatmap.}

\begin{adjustbox}{width=\textwidth}
\begin{tabular}{@{}l l c c *{24}{r}@{}}
\toprule
Model & Recipe & SFT data & RL data & Overall & S01 & S02 & S03 & S04 & S05 & S06 & S07 & S08 & S09 & S10 & S11 & S12 & S13 & S14 & S15 & S16 & S17 & S18 & S19 & S20 & S21 & S22 & S23 \\
\midrule
\multicolumn{28}{l}{\textit{Proprietary teacher (upper-bound reference)}} \\
GPT-5-mini & teacher & -- & -- & \cellcolor[HTML]{F6B988}0.781 & \cellcolor[HTML]{F9CF9B}0.724 & \cellcolor[HTML]{F4AD7F}0.810 & \cellcolor[HTML]{F4AA7C}0.819 & \cellcolor[HTML]{F0946A}0.874 & \cellcolor[HTML]{F19A6F}0.858 & \cellcolor[HTML]{F8CB97}0.735 & \cellcolor[HTML]{F4AC7E}0.812 & \cellcolor[HTML]{F7C190}0.759 & \cellcolor[HTML]{FEEFB5}0.643 & \cellcolor[HTML]{F9CD99}0.729 & \cellcolor[HTML]{F29E72}0.850 & \cellcolor[HTML]{F5B182}0.800 & \cellcolor[HTML]{FCE2AB}0.675 & \cellcolor[HTML]{EF8A62}0.900 & \cellcolor[HTML]{FDEBB2}0.653 & \cellcolor[HTML]{F8CA97}0.737 & \cellcolor[HTML]{F9D09C}0.721 & \cellcolor[HTML]{F8C996}0.738 & \cellcolor[HTML]{F3A477}0.834 & \cellcolor[HTML]{F09269}0.879 & \cellcolor[HTML]{F6B787}0.784 & \cellcolor[HTML]{F5B384}0.795 & \cellcolor[HTML]{F3A477}0.834 \\
\midrule
\multicolumn{28}{l}{\textit{Ours}} \\
Qwen3-VL-4B & Ours & $\mathcal{S}_2\cup\mathcal{P}_1$ & $\mathcal{S}_3$ & \cellcolor[HTML]{F6B989}0.780 & \cellcolor[HTML]{F9CF9B}0.724 & \cellcolor[HTML]{F4AD7F}0.812 & \cellcolor[HTML]{F5B384}0.795 & \cellcolor[HTML]{F08F66}0.887 & \cellcolor[HTML]{F09369}0.877 & \cellcolor[HTML]{FAD49F}0.711 & \cellcolor[HTML]{F9CC98}0.733 & \cellcolor[HTML]{F6BB8B}0.774 & \cellcolor[HTML]{FDEBB2}0.651 & \cellcolor[HTML]{FAD49F}0.711 & \cellcolor[HTML]{F4AE80}0.808 & \cellcolor[HTML]{F8C794}0.745 & \cellcolor[HTML]{FBDAA4}0.696 & \cellcolor[HTML]{F29C71}0.853 & \cellcolor[HTML]{F9CE9A}0.725 & \cellcolor[HTML]{F6BB8A}0.775 & \cellcolor[HTML]{F6B787}0.785 & \cellcolor[HTML]{F7C190}0.759 & \cellcolor[HTML]{F6BA89}0.778 & \cellcolor[HTML]{F1996F}0.861 & \cellcolor[HTML]{F6B787}0.785 & \cellcolor[HTML]{F2A074}0.844 & \cellcolor[HTML]{F29C71}0.853 \\
Qwen3-VL-8B & Ours & $\mathcal{S}_2\cup\mathcal{P}_1$ & $\mathcal{S}_3$ & \cellcolor[HTML]{F6BB8B}0.775 & \cellcolor[HTML]{F9D09C}0.721 & \cellcolor[HTML]{F5B183}0.799 & \cellcolor[HTML]{F8CA96}0.738 & \cellcolor[HTML]{F19A70}0.858 & \cellcolor[HTML]{F2A074}0.843 & \cellcolor[HTML]{FBDDA7}0.688 & \cellcolor[HTML]{F8C593}0.748 & \cellcolor[HTML]{F9CD99}0.728 & \cellcolor[HTML]{FAD5A0}0.709 & \cellcolor[HTML]{FAD49F}0.712 & \cellcolor[HTML]{EF8C63}0.895 & \cellcolor[HTML]{F7C290}0.758 & \cellcolor[HTML]{FBDDA6}0.689 & \cellcolor[HTML]{F2A276}0.839 & \cellcolor[HTML]{F7C18F}0.760 & \cellcolor[HTML]{F4AD7F}0.812 & \cellcolor[HTML]{F7C190}0.759 & \cellcolor[HTML]{FAD8A3}0.700 & \cellcolor[HTML]{F3A77A}0.825 & \cellcolor[HTML]{F2A074}0.844 & \cellcolor[HTML]{F3A377}0.835 & \cellcolor[HTML]{F5B384}0.795 & \cellcolor[HTML]{F19B70}0.857 \\
\midrule
\multicolumn{28}{l}{\textit{Baselines}} \\
Qwen3-VL-8B & raw-distillation SFT & $\mathcal{S}_1\cup\mathcal{P}_1$ & -- & \cellcolor[HTML]{F6BC8B}0.772 & \cellcolor[HTML]{F9D19D}0.719 & \cellcolor[HTML]{F4AF81}0.805 & \cellcolor[HTML]{F5B182}0.801 & \cellcolor[HTML]{F09168}0.882 & \cellcolor[HTML]{F09168}0.882 & \cellcolor[HTML]{F9CF9B}0.723 & \cellcolor[HTML]{FAD49F}0.710 & \cellcolor[HTML]{F8C794}0.745 & \cellcolor[HTML]{FFF6BB}0.625 & \cellcolor[HTML]{F8C492}0.751 & \cellcolor[HTML]{F4AC7E}0.812 & \cellcolor[HTML]{F7C391}0.753 & \cellcolor[HTML]{FBDBA5}0.694 & \cellcolor[HTML]{F29F73}0.846 & \cellcolor[HTML]{FCE4AC}0.670 & \cellcolor[HTML]{F8C593}0.750 & \cellcolor[HTML]{F7BF8D}0.766 & \cellcolor[HTML]{F9CE9A}0.726 & \cellcolor[HTML]{F4AD7F}0.810 & \cellcolor[HTML]{F29D71}0.852 & \cellcolor[HTML]{F8CA97}0.736 & \cellcolor[HTML]{F2A175}0.840 & \cellcolor[HTML]{F3A578}0.831 \\
Qwen3-VL-4B & raw-distillation SFT & $\mathcal{S}_1\cup\mathcal{P}_1$ & -- & \cellcolor[HTML]{F6BD8C}0.771 & \cellcolor[HTML]{FAD39E}0.713 & \cellcolor[HTML]{F3A87A}0.824 & \cellcolor[HTML]{F5B585}0.791 & \cellcolor[HTML]{F09369}0.877 & \cellcolor[HTML]{EF8D65}0.891 & \cellcolor[HTML]{FCE3AB}0.674 & \cellcolor[HTML]{F8C492}0.752 & \cellcolor[HTML]{F7C391}0.754 & \cellcolor[HTML]{FEF1B7}0.637 & \cellcolor[HTML]{FEEEB5}0.644 & \cellcolor[HTML]{F5B283}0.798 & \cellcolor[HTML]{F9CC99}0.730 & \cellcolor[HTML]{FDE6AE}0.665 & \cellcolor[HTML]{F2A175}0.841 & \cellcolor[HTML]{FCE2AB}0.676 & \cellcolor[HTML]{F7C291}0.756 & \cellcolor[HTML]{F8C693}0.748 & \cellcolor[HTML]{F7C290}0.757 & \cellcolor[HTML]{F4AC7E}0.814 & \cellcolor[HTML]{F09168}0.881 & \cellcolor[HTML]{F5B182}0.800 & \cellcolor[HTML]{F3A87B}0.823 & \cellcolor[HTML]{F4AA7C}0.819 \\
Qwen3-VL-8B & RoboReward & -- & -- & \cellcolor[HTML]{FBD9A3}0.698 & \cellcolor[HTML]{F9CF9B}0.723 & \cellcolor[HTML]{F8C896}0.740 & \cellcolor[HTML]{F7C391}0.755 & \cellcolor[HTML]{F3A579}0.830 & \cellcolor[HTML]{F1996E}0.863 & \cellcolor[HTML]{FAD39E}0.713 & \cellcolor[HTML]{FEF0B7}0.639 & \cellcolor[HTML]{FBDAA4}0.697 & \cellcolor[HTML]{FAD5A0}0.708 & \cellcolor[HTML]{E9ECCF}0.517 & \cellcolor[HTML]{F5B384}0.794 & \cellcolor[HTML]{EFEFCA}0.546 & \cellcolor[HTML]{FCE0A9}0.681 & \cellcolor[HTML]{FFF6BB}0.624 & \cellcolor[HTML]{FEF0B6}0.640 & \cellcolor[HTML]{FBDEA7}0.685 & \cellcolor[HTML]{F7BD8C}0.769 & \cellcolor[HTML]{F5F2C5}0.573 & \cellcolor[HTML]{F5B384}0.795 & \cellcolor[HTML]{F9CD99}0.730 & \cellcolor[HTML]{F9CC99}0.731 & \cellcolor[HTML]{FEEDB4}0.647 & \cellcolor[HTML]{FBD9A3}0.699 \\
Qwen3-VL-4B & RoboReward & -- & -- & \cellcolor[HTML]{FCE1AA}0.678 & \cellcolor[HTML]{FAD6A1}0.707 & \cellcolor[HTML]{F7C190}0.759 & \cellcolor[HTML]{F7C391}0.755 & \cellcolor[HTML]{F8C996}0.738 & \cellcolor[HTML]{F6BC8B}0.772 & \cellcolor[HTML]{FBDCA5}0.692 & \cellcolor[HTML]{E6EBD2}0.499 & \cellcolor[HTML]{F9CC99}0.731 & \cellcolor[HTML]{FDECB3}0.651 & \cellcolor[HTML]{E8ECD0}0.512 & \cellcolor[HTML]{F29F73}0.846 & \cellcolor[HTML]{E2E9D6}0.482 & \cellcolor[HTML]{FDE9B0}0.659 & \cellcolor[HTML]{FAD8A2}0.702 & \cellcolor[HTML]{F6F3C4}0.578 & \cellcolor[HTML]{F9CD99}0.729 & \cellcolor[HTML]{FDE9B0}0.658 & \cellcolor[HTML]{F5F2C5}0.573 & \cellcolor[HTML]{F5B686}0.788 & \cellcolor[HTML]{FDEBB2}0.653 & \cellcolor[HTML]{FCE4AC}0.671 & \cellcolor[HTML]{FAD29E}0.715 & \cellcolor[HTML]{F5B686}0.789 \\
\midrule
\multicolumn{28}{l}{\textit{Reference floor (no fine-tune)}} \\
Qwen3-VL-4B & -- & -- & -- & \cellcolor[HTML]{FFF7BC}0.621 & \cellcolor[HTML]{FBDEA7}0.685 & \cellcolor[HTML]{FCE2AB}0.675 & \cellcolor[HTML]{FDECB3}0.649 & \cellcolor[HTML]{FEF1B7}0.638 & \cellcolor[HTML]{F6F2C4}0.576 & \cellcolor[HTML]{EAEDCF}0.520 & \cellcolor[HTML]{E7EBD1}0.507 & \cellcolor[HTML]{FFF6BB}0.625 & \cellcolor[HTML]{FEF4B9}0.631 & \cellcolor[HTML]{FBF5C0}0.600 & \cellcolor[HTML]{F8C593}0.748 & \cellcolor[HTML]{C6DBEF}0.344 & \cellcolor[HTML]{FEF1B7}0.638 & \cellcolor[HTML]{FBDBA5}0.693 & \cellcolor[HTML]{F2F0C8}0.557 & \cellcolor[HTML]{FEEEB5}0.645 & \cellcolor[HTML]{FCE0A9}0.680 & \cellcolor[HTML]{EDEECC}0.535 & \cellcolor[HTML]{FCE4AD}0.670 & \cellcolor[HTML]{FDE6AE}0.664 & \cellcolor[HTML]{F4F2C5}0.570 & \cellcolor[HTML]{F1F0C9}0.552 & \cellcolor[HTML]{F7BE8D}0.768 \\
Qwen3-VL-8B & -- & -- & -- & \cellcolor[HTML]{FBF5BF}0.605 & \cellcolor[HTML]{FCE1AA}0.677 & \cellcolor[HTML]{FEF0B6}0.640 & \cellcolor[HTML]{F9F4C1}0.594 & \cellcolor[HTML]{FEEFB6}0.641 & \cellcolor[HTML]{FEF2B8}0.635 & \cellcolor[HTML]{EDEECC}0.533 & \cellcolor[HTML]{F2F1C7}0.560 & \cellcolor[HTML]{F8F4C2}0.590 & \cellcolor[HTML]{FEF3B9}0.631 & \cellcolor[HTML]{DDE6DA}0.456 & \cellcolor[HTML]{FDE7AF}0.662 & \cellcolor[HTML]{E2E9D6}0.483 & \cellcolor[HTML]{FEEDB4}0.646 & \cellcolor[HTML]{F5F2C5}0.571 & \cellcolor[HTML]{F0F0CA}0.548 & \cellcolor[HTML]{FFF4B9}0.630 & \cellcolor[HTML]{FAD8A2}0.701 & \cellcolor[HTML]{EBEDCE}0.525 & \cellcolor[HTML]{F8C895}0.743 & \cellcolor[HTML]{F7F3C3}0.584 & \cellcolor[HTML]{FBD9A4}0.697 & \cellcolor[HTML]{EFEFCA}0.543 & \cellcolor[HTML]{FBDCA6}0.691 \\
\bottomrule
\end{tabular}
\end{adjustbox}
\end{table*}

%% file: tables/case_study.tex

\newlength{\CaseImgW}
\newlength{\CaseImgH}
\setlength{\CaseImgW}{0.205\textwidth}
\setlength{\CaseImgH}{0.085\textheight}

\newcommand{\CaseImage}[1]{%
  \begingroup
  \setlength{\fboxsep}{0pt}%
  \colorbox{black!4}{%
    \begin{minipage}[c][\CaseImgH][c]{\CaseImgW}
      \centering
      \includegraphics[width=\CaseImgW,height=\CaseImgH,keepaspectratio]{#1}
    \end{minipage}%
  }%
  \endgroup
}

\newcommand{\CaseCard}[2]{%
  \setlength{\fboxsep}{5pt}%
  \fcolorbox{black!18}{white}{%
    \begin{minipage}[t]{0.465\textwidth}
      \centering
      {\small\bfseries #1}\par
      #2
    \end{minipage}%
  }%
}

\newcommand{\correct}[1]{\cellcolor{green!12}\textbf{#1}}
\newcommand{\wrong}[1]{\cellcolor{red!7}#1}

\subsection{Qualitative Case Study}
\label{sec:case-study}

\begin{center}

\begin{tabular}{@{}c@{\hspace{0.8em}}c@{}}
\CaseCard{\textsc{Score-A}: instruction-following score}{
  \begin{tabular}{@{}cc@{}}
    \CaseImage{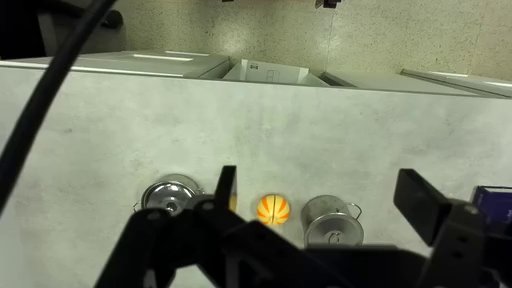} &
    \CaseImage{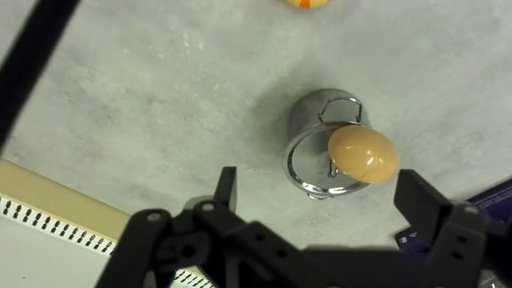} \\
    {\scriptsize first frame} & {\scriptsize final frame}
  \end{tabular}
}
&
\CaseCard{\textsc{Score-B}: VQA answer-quality score}{
  \begin{tabular}{@{}cc@{}}
    \CaseImage{} &
    \CaseImage{} \\
    {\scriptsize first frame} & {\scriptsize final frame}
  \end{tabular}
}
\\[0.8em]
\CaseCard{\textsc{Pair-A}: instruction-following comparison}{
  \begin{tabular}{@{}cc@{}}
    \CaseImage{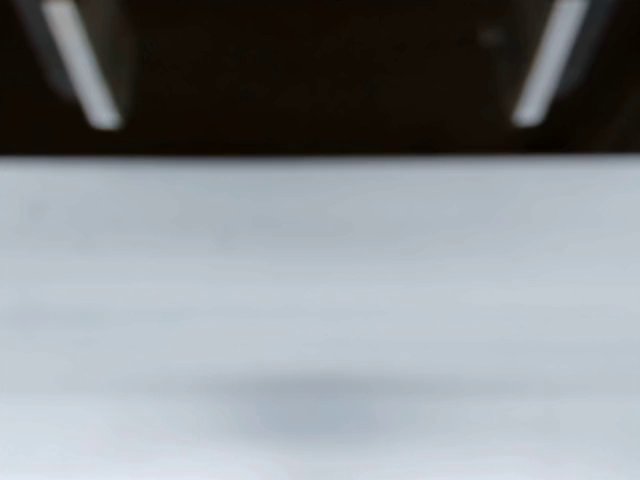} &
    \CaseImage{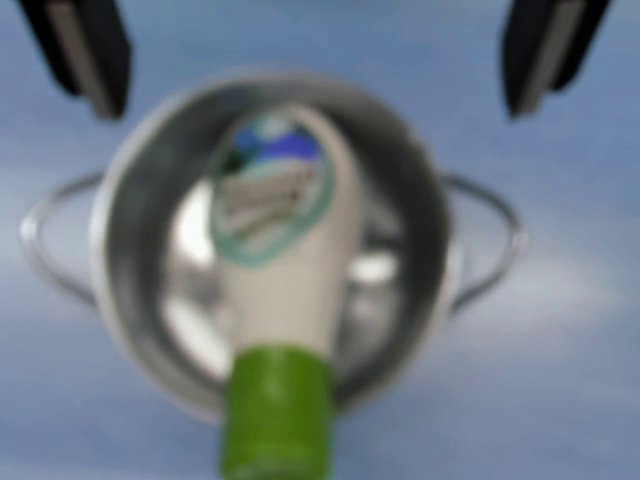} \\
    {\scriptsize video A first} & {\scriptsize video A final} \\[0.25em]
    \CaseImage{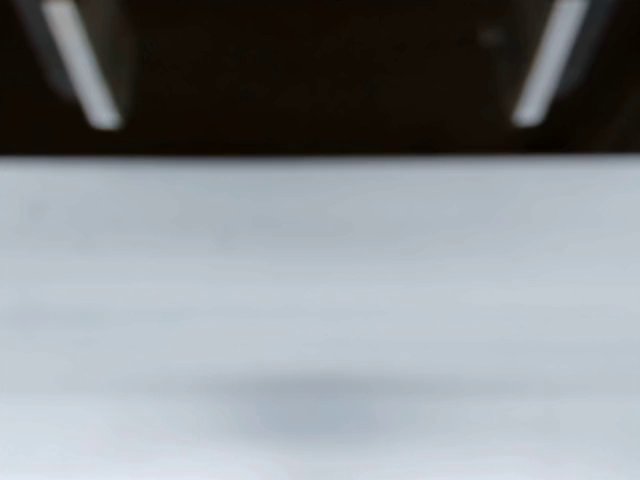} &
    \CaseImage{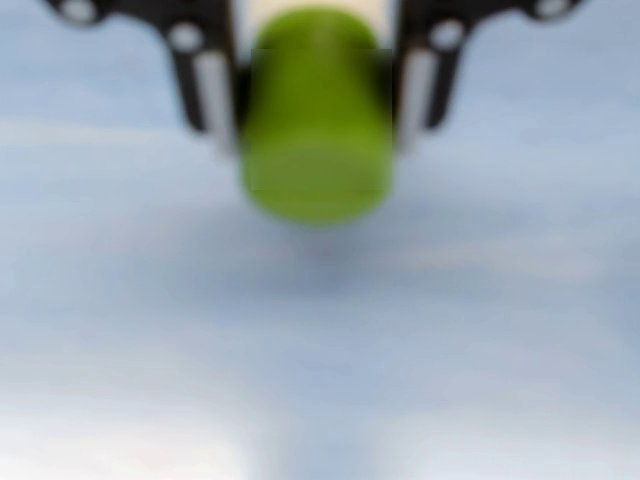} \\
    {\scriptsize video B first} & {\scriptsize video B final}
  \end{tabular}
}
&
\CaseCard{\textsc{Pair-B}: VQA answer-quality comparison}{
  \begin{tabular}{@{}cc@{}}
    \CaseImage{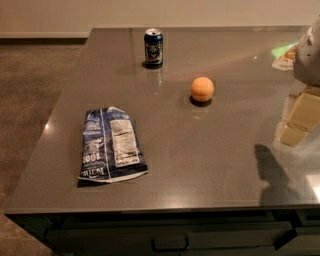} &
    \CaseImage{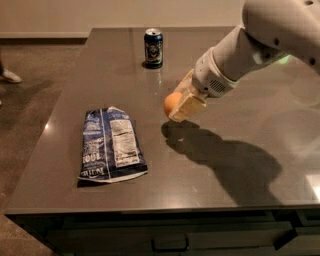} \\
    {\scriptsize first frame} & {\scriptsize final frame}
  \end{tabular}

  {\scriptsize
  Candidate A: move can to front-right corner.\\
  Candidate B: place orange inside blue chip bag.
  }
}
\end{tabular}

\captionof{figure}{\textbf{Qualitative case study across four reward-supervision views.}
We show representative examples from pointwise scoring and pairwise comparison tasks.}
\label{fig:case-study-frames}
\end{center}

\begin{center}
\captionof{table}{\textbf{Model predictions for the qualitative case study.}
Green cells match the gold label, while red cells indicate errors.}
\label{tab:case-study-predictions}

\scriptsize
\setlength{\tabcolsep}{5.2pt}
\renewcommand{\arraystretch}{1.08}
\begin{tabular}{@{}l c c c c c c@{}}
\toprule
Case & GT & Ours & RoboReward-4B & RoboReward-8B & Qwen3-VL-4B & Qwen3-VL-8B \\
\midrule
\textsc{Score-A} & \correct{1} & \correct{1} & \wrong{4} & \wrong{5} & \wrong{5} & \wrong{5} \\
\textsc{Score-B} & \correct{5} & \correct{5} & \wrong{3} & \wrong{3} & \wrong{1} & \wrong{2} \\
\textsc{Pair-A}  & \correct{A} & \correct{A} & \wrong{B} & \wrong{C} & \wrong{C} & \wrong{B} \\
\textsc{Pair-B}  & \correct{B} & \correct{B} & \wrong{C} & \wrong{A} & \wrong{C} & \wrong{C} \\
\bottomrule
\end{tabular}
\end{center}

\noindent\textbf{Analysis.}
Figure~\ref{fig:case-study-frames} and Table~\ref{tab:case-study-predictions} illustrate representative cases where our model aligns its reward prediction with concrete visual evidence, while the baselines often rely on coarse object or action cues. In \textsc{Score-A}, the instruction asks the robot to place the onion into the big pot, but the video shows that the onion remains on the table away from the pot while the robot places an egg on the pot instead. Our model correctly predicts the lowest score ($1$), whereas RoboReward-4B, RoboReward-8B, and untuned Qwen3-VL-4B/8B overestimate the outcome as $4$, $5$, $5$, and $5$, respectively. In \textsc{Score-B}, the candidate answer ``Pick up the yellow cup'' accurately describes the visible robot action, and our model assigns the gold score ($5$), while the baselines underestimate the answer quality. The two pairwise cases show the same pattern in comparative form: in \textsc{Pair-A}, our model selects video A because it makes clearer progress toward placing the ranch bottle into the pot; in \textsc{Pair-B}, it selects candidate B because it is semantically closer to the visible orange and blue chip bag, despite being imperfect. Overall, these examples suggest that our training pipeline improves visual grounding across both pointwise and pairwise reward formats, rather than merely increasing aggregate benchmark scores.